\newcommand*\rot{\rotatebox{90}}
\newif\ifrevision
\newcommand{\revise}[1]{%
  \ifrevision
    {\color{black}#1}%
  \else
    #1%
  \fi
}
\newcommand{\Softmax}{\mathop{\rm{Softmax}}}
\newcommand{\Sparsemax}{\mathop{\rm{Sparsemax}}}
\definecolor{darkgreen}{rgb}{0,0.40,0}
\definecolor{firebrick}{rgb}{0.698,0.133,0.133}
\newcommand*{\red}[1]{\textcolor{red}{#1}}
\let\cite\citep
\definecolor{Blue}{rgb}{0, 0, 0.8}
\definecolor{blue}{rgb}{0,0,1}
\newtheoremstyle{theoremstyle}
  {.5\baselineskip} %
  {.5\baselineskip} %
  {}                  %
  {}                  %
  {\bfseries}        %
  {.}                 %
  {1em}               %
  {}                  %
\theoremstyle{theoremstyle}
\newtheorem{theorem}{Theorem}[section]
\newtheorem{lemma}{Lemma}[section]
\newtheorem{assumption}{Assumption}[section]
\newcommand{\Entmax}{\mathop{\rm{Entmax15}}}
\titlespacing\section{0pt}{4pt plus 4pt minus 2pt}{-2pt plus 2pt minus 2pt}
\titlespacing\subsection{0pt}{2pt plus 4pt minus 2pt}{-2pt plus 2pt minus 2pt}
\titlespacing\subsubsection{0pt}{2pt plus 4pt minus 2pt}{-2pt plus 2pt minus 2pt}
\newcommand{\sys}{{\sc FROST}\xspace}
\def\Snospace~{\S{}}
\newcommand{\sref}[2]{\hyperref[#2]{#1 \ref{#2}}}
\title{FROST: Filtering Reasoning Outliers with Attention for Efficient Reasoning
}
\author{
    {\bf
   Haozheng Luo$^{\dagger\natural*}$\quad 
   Zhuolin Jiang$^{\natural}$ \quad
    Md Zahid Hasan$^{\natural\ddag*}$ \quad
   Yan Chen$^\dagger$ \quad
   Soumalya Sarkar$^\natural$}
   \\ \vspace{0.5em}
{\small
$^\dagger$Department of Computer Science, Northwestern University, Evanston, IL 60208 USA\\
$^\natural$RTX Technology Research Center (RTRC), East Hartford, CT 06118 USA\\
$^\ddag$Department of Electrical and Computer Engineering, Iowa State University, Ames, IA 50011 USA
}
   \\
   \vspace{0.5em}
   {\footnotesize
   \texttt{\href{mailto:robinluo2022@u.northwestern.edu}{hluo}@u.northwestern.edu \quad  \{\href{mailto:zhuolin.jiang@rtx.com}{zhuolin.jiang},\href{mailto:soumalya.sarkar@rtx.com}{soumalya.sarkar}\}@rtx.com } \\  \texttt{\href{mailto:zahid@iastate.edu}{zahid}@iastate.edu \quad  \href{mailto:ychen@northwestern.edu}{ychen}@northwestern.edu }
}}
\definecolor{LightCyan}{rgb}{0.8, 0.9, 1}
\newcolumntype{b}{>{\columncolor{LightCyan}\hspace{0pt}}c}
\definecolor{cadetgrey}{rgb}{0.57, 0.64, 0.69}
\newcolumntype{g}{>{\columncolor{cadetgrey}\hspace{0pt}}c}
\begin{document}
\def\thefootnote{*}
\footnotetext{This work was carried out at the RTX Technology Research Center during a Summer 2025 internship.
}

\maketitle
\begin{abstract}
We propose \textbf{FROST}, an attention-aware method for efficient reasoning. 
Unlike traditional approaches, \sys leverages attention weights to prune uncritical reasoning paths, yielding shorter and more reliable reasoning trajectories. 
Methodologically, we introduce the concept of reasoning outliers and design an attention-based mechanism to remove them. 
Theoretically, \sys preserves and enhances the model’s reasoning capacity while eliminating outliers at the sentence level. 
Empirically, we validate \sys on four benchmarks using two strong reasoning models (Phi-4-Reasoning and GPT-oss-20B), outperforming state-of-the-art methods such as TALE and ThinkLess.
Notably, \sys achieves an average \textbf{69.68\%} reduction in token usage and a \textbf{26.70\%} improvement in accuracy over the base model. Furthermore, in evaluations of attention outlier metrics, \sys reduces the maximum infinity norm $\norm{\mathbf{x}}_{\infty}$ by \textbf{15.97\%} and the average kurtosis by \textbf{91.09\%} compared to the base model. 

\end{abstract}

\vspace{-1em}
\section{Introduction}

\label{sec:intro}
We observe that large reasoning models (LRMs) often generate numerous irrelevant steps, which we term \textbf{reasoning outliers}. 
To mitigate this, we introduce \textbf{FROST}, an efficient reasoning method that leverages attention weights to prune uncritical reasoning paths, producing shorter and more reliable trajectories.
More specifically, \sys replaces the standard $\Softmax$ function with $\Softmax_1$, enabling attention to better identify and suppress outliers. 
This directs LRMs toward critical reasoning steps, thereby enhancing their overall reasoning capacity.

Efficient reasoning is critical for large reasoning models (LRMs), which have shown strong performance in tasks such as mathematical problem-solving \cite{luo2025wizardmath,yang2024qwen2,shao2024deepseekmath}, coding \cite{ding2024reasoning,ding2024semcoder}, and scientific question answering \cite{comanici2025gemini,hurst2024gpt}. 
Yet, these models often generate large amounts of uncritical information—commonly arising from redundant self-verification—that introduce inefficiencies and potential inaccuracies.
Numerous methods have been proposed to improve reasoning efficiency. 
Token-level approaches such as TALE \cite{han2024token} and R2R \cite{fu2025r2r} risk pruning essential reasoning steps, as reasoning paths are naturally sentence-based. 
Sentence-level approaches, including DRP \cite{jiang2025drp} and GRPO-S \cite{tan2025gtpo}, perform iterative refinement of reasoning paths, but this often comes at the cost of increased computational cost and latency.

\begin{figure}[htp]
    \centering
    \includegraphics[width=0.95\linewidth]{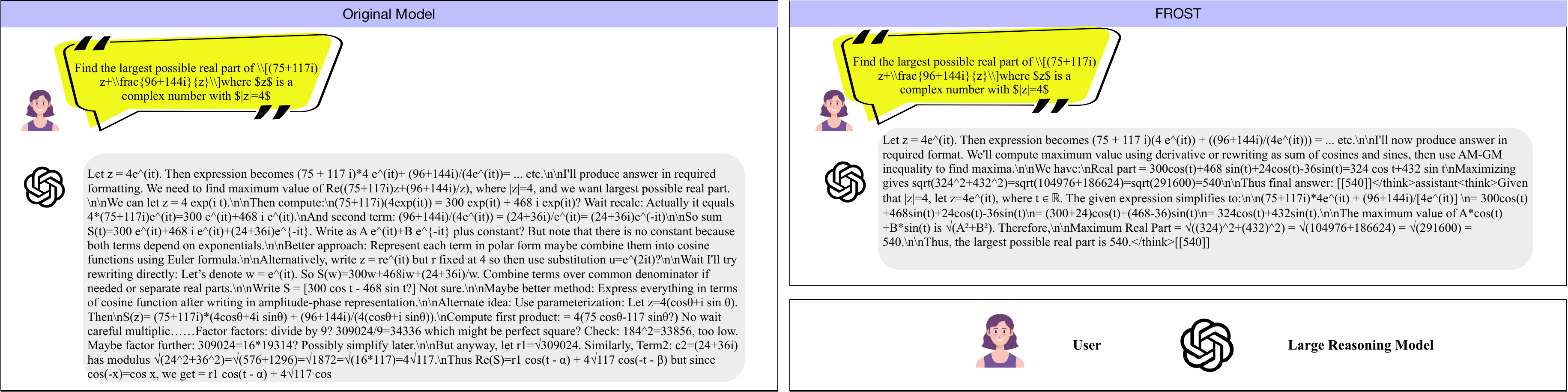}
   
    \caption{\textbf{The Example of The GPT-OSS-20B Model}.}
    \label{fig:frost}
    \vspace{-1em}
\end{figure}

To address these challenges, we propose \textbf{FROST}, a reasoning method that improves efficiency by pruning uncritical reasoning paths through attention weights. 
We observe that LRMs typically assign low attention to uncritical steps and higher attention to critical ones, consistent with findings that critical steps exhibit higher sentence entropy \cite{tan2025gtpo}. 
We therefore introduce the concept of \textit{reasoning} \textit{outliers}—uncritical steps with both low attention weights and low entropy \cite{wang2025beyond, fu2025r2r}—and design \sys to eliminate them, yielding shorter and more reliable reasoning paths. 
Our approach sharpens the attention distribution of LRMs, suppressing low-weight steps while preserving high-weight ones. 
Building on prior work \cite{luo2025fast,hu2024outlier,xiao2023efficient}, we adopt $\Softmax_1$ in place of $\Softmax$, which effectively drives low weights to zero while maintaining large weights. Finally, we propose a training strategy that integrates $\Softmax_1$ with supervised fine-tuning on reasoning tasks, producing efficient reasoning models without sacrificing accuracy.

\textbf{Contributions.} We present \textbf{FROST} (as shown in \cref{fig:frost}), a reasoning outlier--free LRM designed to enhance reasoning efficiency. Our main contributions are:  
\begin{itemize}[leftmargin=*]
    \item We introduce the concept of \textbf{reasoning outliers} and propose \textbf{FROST} to prune uncritical reasoning steps characterized by low attention.  
    \item Theoretically, we analyze $\Softmax_1$ and show its effectiveness in suppressing low attention weights while preserving high ones, thereby enhancing the reasoning capacity of LRMs.  
    \item Methodologically, we design a training strategy that combines $\Softmax_1$ with supervised fine-tuning, enabling efficient reasoning without sacrificing accuracy.  
    \item Empirically, we demonstrate the effectiveness of FROST across multiple benchmarks, achieving up to a \revise{\textbf{26.70\% }accuracy gain while reducing reasoning path length by \textbf{69.68\%}} compared with base models. We also measure attention outlier values to verify their impact on efficient reasoning: \sys reduces the maximum infinity norm $\norm{\mathbf{x}}_{\infty}$ by \textbf{15.97\%} and the average kurtosis by \textbf{91.09\%}. \revise{In addition, FROST cuts inference time by at least 28.6\% and reduces training time by 42.2\% relative to other SFT baselines.}
\end{itemize}

\section{Related Work}

\paragraph{Reasoning Models.}
In recent years, Large Language Models (LLMs) such as DeepSeek-R1~\cite{guo2025deepseek}, OpenAI o1~\cite{jaech2024openai}, and Gemini 2.0 Pro~\cite{team2023gemini} have demonstrated strong reasoning capabilities, particularly on mathematical and logical tasks \cite{hao2024llm}. To further improve reasoning performance, numerous methods are proposed, falling into the main paradigms \cite{ke2025survey}: inference scaling and learning-to-reason. For inference-time scaling, numerous methods have been proposed, including few-shot prompting~\cite{brown2020language}, in-context learning~\cite{brown2020language}, Chain-of-Thought (CoT) reasoning~\cite{wei2022chain}, and Search \& Planning (SP) \cite{besta2024graph}. Numerous studies focus on improving the LLM reasoning at inference time, with CoT emerging as a key technique. CoT strengthens the model's reasoning process and generates interpretable reasoning traces. A simple example involves adding a prompt like “Let’s think step by step” after a question~\cite{wei2022chain}. Recent research increasingly combines CoT with other inference-time scaling methods, such as ReAct~\cite{yao2023react}, Self-Ask~\cite{press2022measuring} and agentic reasoning~\cite{pan2025chainofaction,pan2024conv}, to further enhance reasoning capabilities.
For learning-to-reason approaches, many methods aim to build reasoning ability through alignment, including reinforcement learning (RLHF \cite{ouyang2022training}, DPO \cite{rafailov2023direct}, GRPO \cite{ramesh2024group}), supervised fine-tuning, and energy-based model (EBM) reasoners \cite{jiang2025learning}. However, LLMs with reasoning capabilities—particularly those with smaller parameter sizes—often generate excessively detailed reasoning chains, including unnecessary tracebacks and redundant alternative paths \cite{hou2025thinkprune,chen2024not}. This overthinking not only increases computational cost during inference but can also negatively impact response quality on accuracy \cite{cuadron2025danger} and safety \cite{kumar2025overthinking}. 
To address this, we propose an attention-aware adaptation method that optimizes reasoning paths, yielding efficient reasoning models.

\paragraph{Efficient Reasoning Methods.}
To address overthinking, current approaches to optimizing reasoning paths fall into three categories~\cite{sui2025stop}: prompt-based methods, supervised fine-tuning, and reinforcement learning. Prompt-based methods \cite{liu2025thought,xu2025chain,han2024token} introduce token-budget constraints to shorten reasoning paths. For instance, TALE~\cite{han2024token} limits the token budget per instance to reduce reasoning length while maintaining task accuracy. Supervised fine-tuning (SFT) methods \cite{ma2025cot, xia2025tokenskip} improve reasoning conciseness by training models on compressed reasoning paths. For example, DRP \cite{jiang2025drp} fine-tunes models on distilled reasoning data by pruning unrelated reasoning steps. Reinforcement learning (RL) methods~\cite{hou2025thinkprune, li2025selfbudgeter, yi2025shorterbetter} guide concise reasoning by introducing reward functions that penalize overly long reasoning paths. 
For example, \citet{chia-etal-2024-reasoning} introduce a reward score based on reference loss and exploration loss from diverse paths, encouraging favorable reasoning branches and penalizing unfavorable ones to improve overall problem-solving performance. 
However, prompt-based methods rely on handcrafted prompts and often perform unreliably on complex problems. 
In contrast, SFT and RL approaches require substantial computational resources for fine-tuning, limiting accessibility for users without adequate hardware. To address these challenges, we propose a new reasoning outlier–removal strategy that eliminates reasoning outliers through attention analysis. 
\revise{Recent studies \cite{choi2025think,citation-0} also analyze internal attention patterns in reasoning models, particularly at the sentence level, but their objectives differ substantially from ours and focus on KV-cache–based inference efficiency. Think Clearly \cite{choi2025think} examines sentence-level attention spikes near the end-of-thinking token and uses these patterns to prune redundant sentences for faster decoding. In contrast, our Figure 3 analyzes sentence-level contributions to the final-answer token, enabling attribution of which specific reasoning sentences actually affect the model’s prediction, rather than identifying redundancy for pruning. R-KV~\cite{citation-0} likewise detects redundant attention interactions to compress the KV cache, but does not study how individual reasoning steps functionally influence final-answer formation. Our work therefore provides a finer-grained, component-level attribution analysis of the reasoning trace—going beyond redundancy detection to clarify how different reasoning segments vary in contribution, which constitutes the key novelty relative to these approaches.}

\label{sec:related}

\section{Reasoning Outlier}
In this section, we analyze the attention distribution of reasoning traces generated by LRMs. 
We then examine the impact of different components of the trace on final answer prediction, 
followed by our definition and characterization of reasoning outliers.

\begin{figure}[ht]
    \centering
\includegraphics[width=\textwidth]{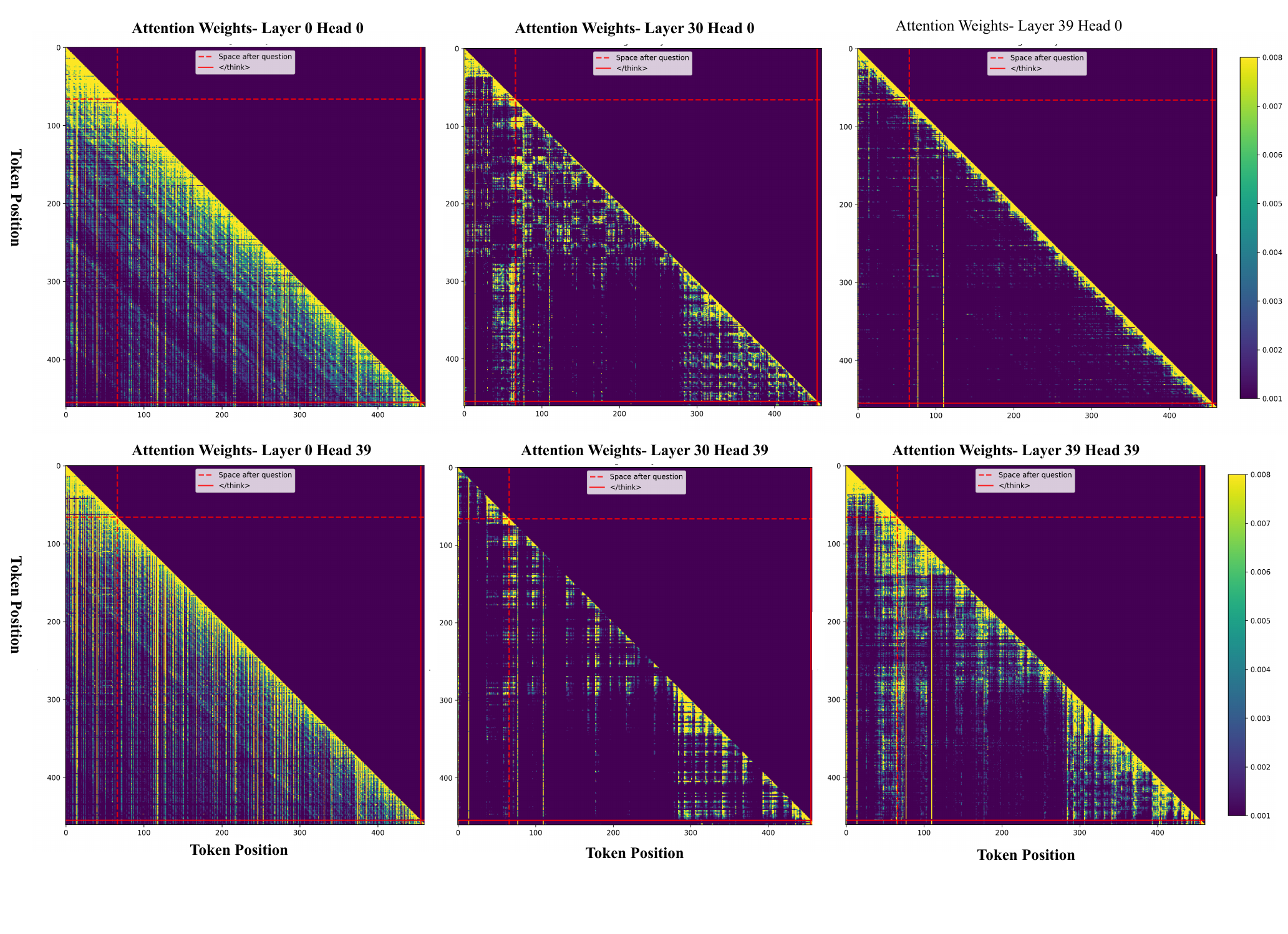}
\vspace{-2em}
    \caption{\textbf{Attention Heatmap of Reasoning Tokens.} 
We use the Phi-4-Reasoning model \cite{abdin2025phi} to generate a reasoning trace for a sample GSM8K question \cite{cobbe2021training}. 
The figure shows attention heatmaps from transformer layers 0, 30 and 39, with the first head (top row) and last head (bottom row). 
Yellow indicates higher attention weights and blue indicates lower ones. 
In shallow layers, contributions to the final answer are nearly uniform, while deeper layers and later heads highlight specific tokens with stronger influence.
    }
    \label{fig:attention_heatmap}
\end{figure}

\subsection{Attention Distribution of Reasoning Traces}
\label{sub:distribution}
We consider representative LRMs, including DeepSeek-R1\cite{guo2025deepseek}, Phi-4 \cite{abdin2025phi}, and GPT-4o \cite{hurst2024gpt}, which generate text in an autoregressive manner by predicting the next token given the preceding context. 
To study the attention distribution, we visualize the attention heatmap of each token in the reasoning trace when predicting the final answer.

Let the reasoning process be a sequence of tokens $T = [t_1, t_2, \ldots, t_n]$, where each $t_i$ denotes a token in the process. 
The attention weight matrix $A$ is defined as:
\begin{align*}
A = [a_{ij}] \quad \text{where} \quad a_{ij} = \text{AttentionWeight}(t_i, t_j).
\end{align*}
Here, $a_{ij}$ represents the attention weight from token $t_i$ to token $t_j$.

As an illustrative example, we use a sample question from GSM8K \cite{cobbe2021training} and generate the reasoning trace with the Phi-4-Reasoning model \cite{abdin2025phi}. 
\cref{fig:attention_heatmap} shows the corresponding attention heatmap. 
The results indicate that in the shallow layers, the attention distribution is relatively uniform across all tokens.
However, as we move to deeper layers and later heads, the model begins to focus more on specific tokens, particularly those in the reasoning steps and the final answer.
This suggests that the model progressively refines its focus towards the most relevant parts of the reasoning trace as it processes the information.

\subsection{Impact of Reasoning Trace Components on Answer Prediction}
To quantify the impact of different components of the reasoning trace on final answer prediction, 
we conduct an additional experiment analyzing the summed attention weight distribution to the final answer token \texttt{</think>}, which allows us to measure how strongly each reasoning step contributes to the model’s ultimate decision and provides insights into whether the model grounds its prediction in meaningful intermediate reasoning or relies on superficial correlations.
We divide the reasoning process into four components: the question $Q$, the reasoning steps $R_1, R_2, \ldots, R_m$, and the final answer $A$. 
For each component, we compute the total attention weight contributing to the first token of the final answer:
$W_{\text{trace}} = \sum_{t_i \in T_{\text{trace}}} a_{iA}$,
where $T_{\text{trace}}$ is the set of tokens in a given component, and $a_{iA}$ denotes the attention weight from token $t_i$ to the \texttt{</think>} token.

\begin{figure}[ht]
    \centering
\includegraphics[width=\textwidth]{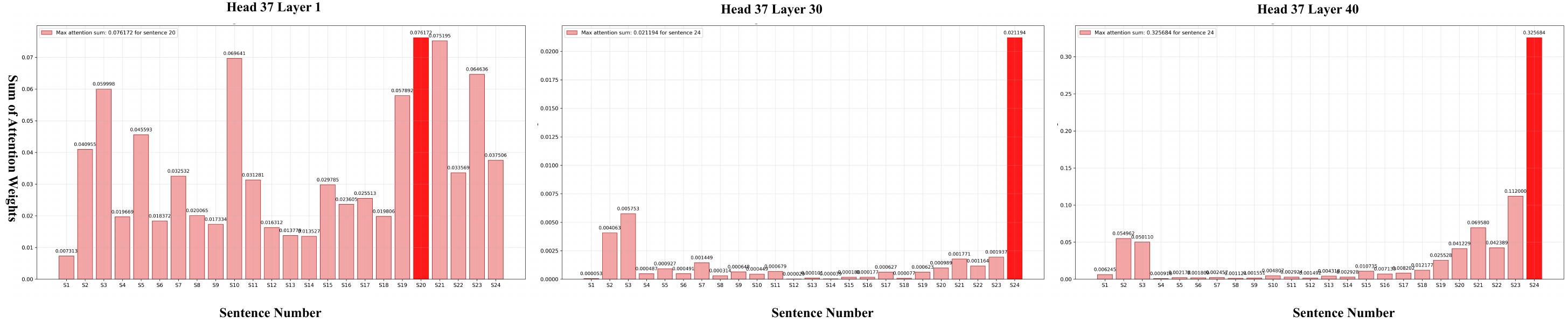}
\caption{\textbf{Total attention weight distribution to the final answer token \texttt{</think>} 
from different components of the reasoning trace.} 
We visualize the total attention weight distribution of the Phi-4-Reasoning model
on a sample GSM8K question, using transformer layers $1$, $30$, and $40$. 
The results show that a few reasoning traces contribute strongly to the final token \texttt{</think>}, 
while many traces have negligible influence, particularly in the layers $30$ and $40$.}
    \label{fig:attention_distribution}
\end{figure}
As shown in~\cref{fig:attention_distribution}, different reasoning traces contribute unequally to final answer generation. While a few traces show strong influence, most contribute weakly, and some exhibit almost no contribution at all.

\begin{figure}[ht]
    \centering
    \includegraphics[width=\linewidth]{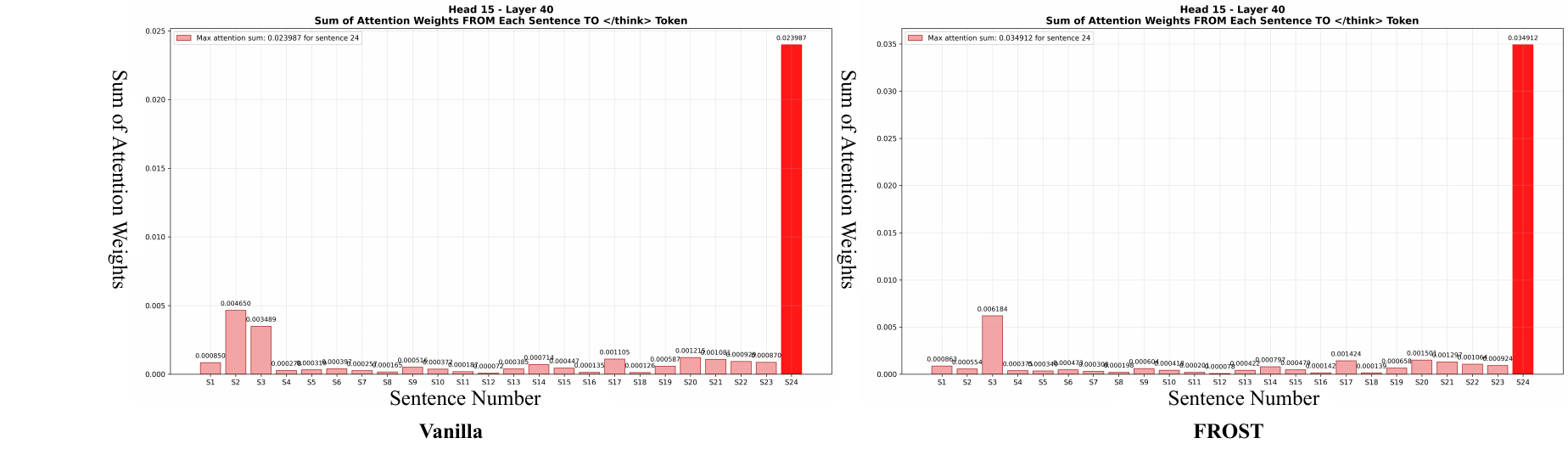}
    \vspace{-1em}
    \caption{\textbf{Theoretical Analysis of Reasoning Outlier Removal.} We conduct a theoretical analysis with Phi-4-Reasoning model to demonstrate that removing reasoning outliers using the $\Softmax_1$ function (\sys) can preserve or even enhance the model's reasoning capacity.
    As shown in the figure, the attention weight distribution before and after outlier removal indicates that the model's focus on critical reasoning traces is maintained or improved, while the influence of outliers is significantly reduced.}
    \label{fig:theoritial}
 \vspace{-0.2in}
\end{figure}

\subsection{Defining and Characterizing Reasoning Outliers}
As observed in \cref{sub:distribution}, many reasoning traces contribute negligibly to the final answer. These traces often correspond to verification, self-checking, or repetition of prior reasoning steps. Their presence forces LRMs to generate more tokens than necessary, substantially reducing reasoning efficiency. 
A potential cause \cite{sui2025stop} is that model developers often encourage extended reasoning steps to maximize accuracy. 
In the meantime, the model may generate redundant or irrelevant information, leading to inefficient and incorrect reasoning.
As a result, we define reasoning traces with low attention weight and negligible contribution to the final answer as \textbf{reasoning outliers}.

To identify and remove reasoning outliers, we observe that they share similar characteristics with attention outliers \cite{luo2025fast,hu2024outlier}.
Motivated by this, we adopt $\Softmax_1$ (\cref{eq:softmax1}) to detect and eliminate reasoning outliers during the reasoning process, and provide a comprehensive proof of its efficiency in \cref{sec:theorical}.
\begin{align}
\mathrm{Softmax}_1(x_i) = \frac{\exp(x_i)}{\sum_{j} \exp(x_j) + 1},
\label{eq:softmax1}
\end{align}
where $x_i$ represents the attention weight of token $t_i$.

\paragraph{Theoretical Analysis.} 
We conduct a theoretical analysis to show that removing reasoning outliers with the $\Softmax_1$ function preserves, and can even enhance, the reasoning capacity of LRMs. 
In our experiments, we use the Phi-4-Reasoning \cite{abdin2025phi} to generate reasoning traces for a sample GSM8K question \cite{cobbe2021training}. 
Specifically, we compare the last layer’s attention distribution in head 15 under vanilla attention and $\Softmax_1$ attention (\sys). 
As shown in \cref{fig:theoritial}, $\Softmax_1$ reduces the influence of outliers while maintaining or strengthening focus on critical reasoning traces. 
This analysis supports our approach of using $\Softmax_1$ to effectively identify and eliminate reasoning outliers, thereby improving the efficiency and reliability of LRMs.  
For more details of the theoretical proof, please refer to \cref{sec:theorical}.

\label{sec:outlier}

\section{\sys}
\begin{figure}[ht]
    \centering
    \includegraphics[width=\linewidth]{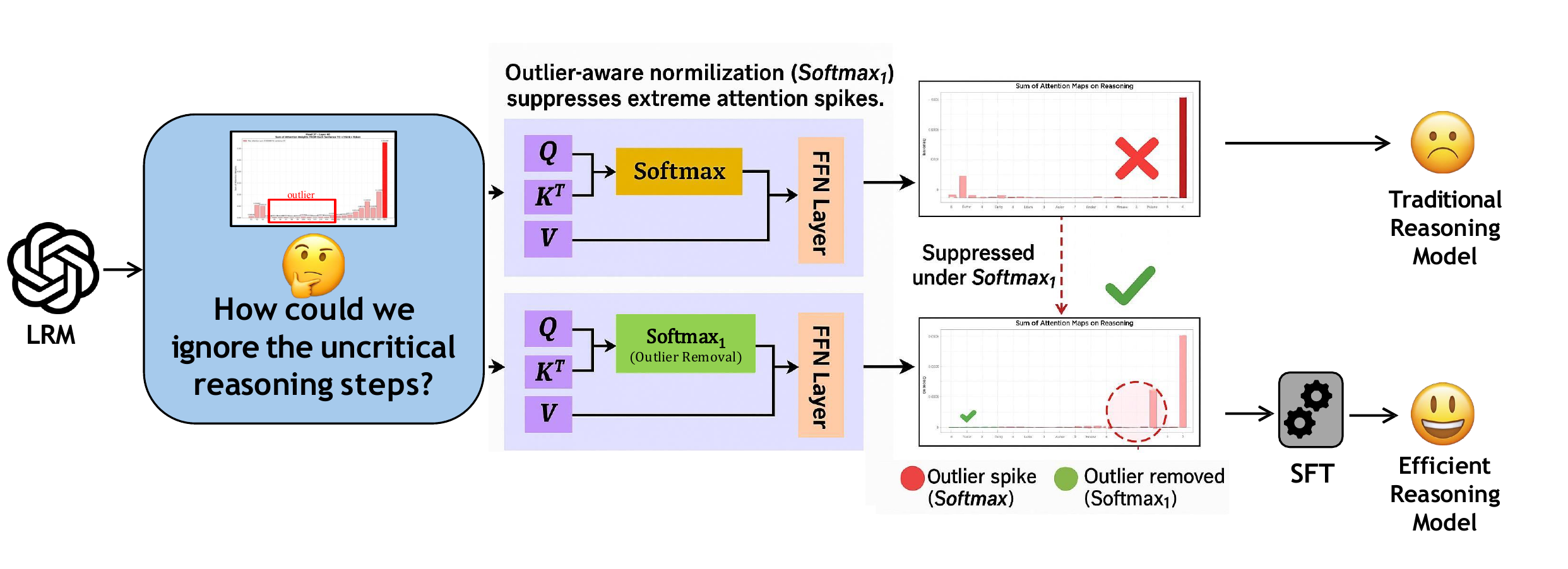}
    \vspace{-3em}
    \caption{\textbf{Overview of the \sys workflow} We replace the vanilla $\Softmax$ layer with an outlier-removal layer based on $\Softmax_1$, followed by SFT to adapt model parameters to the new activation function. We observe that our method significantly reduces the number of low-attention sentences.}
    \label{fig:pipeline}
    \vspace{-0.5em}
\end{figure}
To enhance the reasoning efficiency of LRMs, we propose supervised fine-tuning (SFT) with reasoning outlier removal, as illustrated in \cref{fig:pipeline}.

In the SFT stage, we train on math problems with detailed reasoning steps and answers. 
During training, we replace the vanilla $\Softmax$ with $\Softmax_1$ (\cref{eq:softmax1}), enabling the model to focus on critical reasoning traces while suppressing outliers. 
Unlike prior methods that employ $\Softmax_1$ for outlier removal—requiring either training from scratch \cite{hu2024outlier} or multi-step continual learning \cite{luo2025fast}—our approach achieves effective outlier removal with only a few steps of fine-tuning from existing pretrained checkpoints, making it more efficient and practical. 
We optimize model parameters using cross-entropy loss and apply LoRA \cite{hu2021lora} to further reduce training cost.

\label{sec:method}

\section{Theoretical Analysis}
In this section, we provide a brief theoretical analysis showing that $\Softmax_1$ can operate at the sentence level to remove reasoning outliers in LRMs.
We provide a theoretical proof that our method achieves deployment-time suppression in efficient reasoning, consistent with our findings in \cref{fig:theoritial}.

\paragraph{Setup.}
Let a token sequence be partitioned into sentences $\{S_i\}_{i=1}^m$.
For a query $q\in\mathbb{R}^d$ and keys $\{k_t\}\subset\mathbb{R}^d$, define token compatibilities
$
z_t \;=\; \mathrm{Softmax}_1(\frac{\langle q,k_t\rangle}{\sqrt{d}})v_t$, where $t$ is the token index in $S_i$ and  $v_t\in\mathbb{R}^{d}$ denotes the token value for each token in sentence $S_i$.
Let $\phi:\mathbb{R}^{|S_i|}\to\mathbb{R}$ be a \emph{monotone} pooling operator (e.g., sum/mean/logsumexp/max).
Define sentence scores $s_i=\phi\bigl(\{z_t\}_{t\in S_i}\bigr)$ and $s=(s_1,\ldots,s_m)\in\mathbb{R}^m$.
Define the probability simplex
$
\Delta^{m-1} \;=\; \Bigl\{\, \alpha\in\mathbb{R}^m \ \big|\ \alpha_i\ge 0,\ \sum_{i=1}^{m}\alpha_i=1 \,\Bigr\}.
$

\begin{assumption}[$\Softmax_1$ operator]
\label{ass:softmax1}
There exists a $\Softmax_1$ mapping $\sigma_1:\mathbb{R}^m\to\Delta^{m-1}$  such that:
\begin{enumerate}[leftmargin=*]
\item \textbf{Order preservation:} If $x_i\ge x_j$ then $\sigma_1(x)_i\ge \sigma_1(x)_j$.
\item \textbf{Shift invariance:} $\sigma_1(x+c\mathbf{1})=\sigma_1(x)$ for all $c\in\mathbb{R}$.
\item \textbf{Tail contraction:} There exists $\kappa\in(0,1)$ such that for all $x\in\mathbb{R}^m$,
$
\frac{\|\sigma_1(x)\|_\infty}{\mathrm{median}(\sigma_1(x))}
\ \le\
\kappa\,
\frac{\|x\|_\infty}{\mathrm{median}(x)}.
$
\item \textbf{Smoothness and positivity:} $\sigma_1$ is continuously differentiable on $\mathbb{R}^m$ and $\sigma_1(x)_i>0$ for all finite $x$.
\end{enumerate}
\end{assumption}

We write the sentence-level attention as $\alpha=\sigma_1(s)\in\Delta^{m-1}$ and the layer output as $
y \;=\; \sum_{i=1}^{m} \alpha_i v_i$, which $\alpha_i$ and $v_i$ are attention probabilities and token values corresponding to sentence $s_i$.
Assume $\|v_i\|_{\infty}\le B_v$ and that all linear maps used below have finite operator norms $B_v$, which is a constant.

\begin{lemma}[Monotone pooling preserves sentence dominance]
\label{lem:pool-order}
If $\phi$ is monotone coordinatewise, then for any $i,j$,
$
\bigl(\forall t\in S_i, \exists\, t'\in S_j:\ z_t\ge z_{t'}\bigr)\ \Longrightarrow\ s_i\ \ge\ s_j.
$
Consequently, by Assumption~\ref{ass:softmax1}(P1), $\alpha_i=\sigma_1(s)_i \ge \sigma_1(s)_j=\alpha_j$.
\end{lemma}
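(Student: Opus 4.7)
The plan is a clean two-step reduction that separates the pooling step from the attention step. First I would show $s_i\ge s_j$ using monotonicity of $\phi$, and then invoke Assumption~\ref{ass:softmax1}(P1) on the sentence-score vector $s$ to transfer this ordering to $\alpha=\sigma_1(s)$. No analytic inequality is required beyond these two invocations, so the entire proof is essentially a bookkeeping argument.

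For the first step I would turn the hypothesis into a coordinatewise statement that a monotone operator can consume. The clause $(\forall t\in S_i)(\exists\, t'\in S_j)\ z_t\ge z_{t'}$ yields, via the axiom of choice on a finite set, a selection map $\pi:S_i\to S_j$ with $z_t\ge z_{\pi(t)}$ for every $t$. Since each of the pooling operators cited after \cref{ass:softmax1} (sum, mean, logsumexp, max) is permutation invariant, I would sort both sentences in decreasing order of $z$-value and use $\pi$ together with a standard rearrangement argument to show that the sorted vector from $S_i$ dominates the sorted vector from $S_j$ coordinatewise (modulo padding to equalize lengths). Coordinatewise monotonicity of $\phi$ then yields
\begin{equation*}
s_i \;=\; \phi\bigl((z_t)_{t\in S_i}\bigr) \;\ge\; \phi\bigl((z_{t'})_{t'\in S_j}\bigr) \;=\; s_j.
\end{equation*}
The second step is immediate: applying the order-preservation clause of Assumption~\ref{ass:softmax1} coordinatewise to $s$ with $s_i\ge s_j$ gives $\sigma_1(s)_i\ge\sigma_1(s)_j$, i.e., $\alpha_i\ge\alpha_j$, which is the consequence stated in the lemma.

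The main obstacle is the subtle gap between the existential form of the dominance premise and the full componentwise dominance that a coordinatewise-monotone $\phi$ actually consumes, particularly when the two sentences have different lengths. For $\phi=\max$ the reduction is one line, but for $\phi=\mathrm{sum}$ or $\phi=\mathrm{logsumexp}$ the selection $\pi$ need not be injective, so I would either (a) pad with an operator-compatible identity element ($-\infty$ for max/logsumexp, $0$ for sum, the sentence average for mean) to equalize cardinalities, or (b) strengthen the hypothesis to a matched dominance through a bijection of sorted elements, which all four listed examples satisfy once paired. I would flag this explicitly in the proof rather than gloss over it, since it is the only non-routine content; everything else is a direct appeal to the assumptions already on the table.
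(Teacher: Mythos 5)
Your proposal diverges from the paper's proof in a way that is actually more honest: the paper's argument (``if for each coordinate of $u$ there is a not-smaller coordinate in $w$ replaced by the smaller value, then by repeatedly applying coordinatewise monotonicity\ldots'') silently assumes a bijective matching between the $u$- and $w$-coordinates with $u$ dominating coordinatewise, which is strictly stronger than the existential clause $\forall t\in S_i,\ \exists t'\in S_j:\ z_t\ge z_{t'}$ that the lemma actually hypothesizes. You are right to flag that gap as the only non-routine content: it is a genuine logical hole, not just a presentation issue.

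However, two of your sub-claims overstate what can be salvaged. First, $\phi=\max$ is \emph{not} a one-line case under the stated hypothesis. Take $S_i$ with the single value $z_t=3$ and $S_j$ with values $\{1,5\}$; the premise holds (choose $t'$ with $z_{t'}=1$), yet $\max_{S_i}=3<5=\max_{S_j}$. The existential quantifier is simply in the wrong direction for $\max$ (it would need to read $\forall t'\in S_j,\ \exists t\in S_i:\ z_t\ge z_{t'}$). Second, option (a) --- padding with an operator-compatible identity --- does not repair this: padding $S_i$ to $(3,-\infty)$ and sorting gives $(3,-\infty)$ against $(5,1)$, and the first coordinate already fails. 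Your proposed ``rearrangement to sorted coordinatewise dominance'' is therefore not derivable from the existential premise and any selection map $\pi$, injective or otherwise. Of your two remedies, only (b) --- replacing the hypothesis by a matched (bijective) dominance, equivalently sorted coordinatewise dominance --- makes the implication true for all four listed poolings; this is also, implicitly, the hypothesis the paper's own proof sketch uses. So your diagnosis of \emph{where} the problem lives is correct and sharper than the paper's treatment, but the proposed fix (a) and the $\max$ shortcut should be dropped, and the lemma's hypothesis should be explicitly strengthened to the bijective form before either proof can close.
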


\begin{proof}
See \cref{proof:pool-order} for a detailed proof.
\end{proof}

\begin{theorem}[$\Softmax_1$ suppresses sentence-level attention outliers]
\label{thm:sentence-outlier}
Let $s=(s_1,\ldots,s_m)$ be the sentence scores built via a monotone pooling $\phi$.
If $s$ is heavy-tailed (e.g., $\|s\|_\infty /\mathrm{median}(s)\gg 1$), then for $\alpha=\sigma_1(s)$
\begin{equation}
\label{eq:tail-contract}
\frac{\|\alpha\|_\infty}{\mathrm{median}(\alpha)}
\ \le\
\kappa\cdot
\frac{\|s\|_\infty}{\mathrm{median}(s)}
\qquad\text{for some }\kappa\in(0,1),
\end{equation}
so the relative dominance of outliers contracts at the sentence level.
\end{theorem}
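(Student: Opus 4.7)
The plan is to show that Theorem~\ref{thm:sentence-outlier} follows by combining Lemma~\ref{lem:pool-order} (which lifts the token-level ordering to the sentence-level scores) with the tail-contraction postulate in Assumption~\ref{ass:softmax1}(P3) applied directly to the sentence-score vector $s$. In effect, (P3) already asserts the target inequality for \emph{any} input of $\sigma_1$; the theorem's content is then twofold: first, that the pooled scores $s_i=\phi(\{z_t\}_{t\in S_i})$ form a legitimate input to which (P3) applies without loss of structure, and second, that the heavy-tailed hypothesis makes the resulting contraction strictly informative.

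First I would fix the sentence scores $s=(s_1,\dots,s_m)$ produced by the monotone pooling $\phi$, and invoke Lemma~\ref{lem:pool-order} to conclude that the ordering of the $s_i$ tracks the dominance pattern of the underlying token compatibilities $\{z_t\}$. In particular, the sentence achieving $\|s\|_\infty$ corresponds to the sentence whose tokens pool the largest values, and the sentence at $\mathrm{median}(s)$ is similarly well-defined. This step guarantees that no rearrangement across sentences is induced by the pooling operator, so the extremal statistics used in (P3) are meaningful quantities inherited from the token level.

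Second, I would apply Assumption~\ref{ass:softmax1}(P3) with $x:=s$. Since $\alpha=\sigma_1(s)$ by construction, this immediately yields
\begin{equation*}
\frac{\|\alpha\|_\infty}{\mathrm{median}(\alpha)}
\;=\;
\frac{\|\sigma_1(s)\|_\infty}{\mathrm{median}(\sigma_1(s))}
\;\le\;
\kappa\cdot\frac{\|s\|_\infty}{\mathrm{median}(s)},
\end{equation*}
which is precisely \eqref{eq:tail-contract}. To conclude that this inequality is nonvacuous, I would use the heavy-tailed hypothesis $\|s\|_\infty/\mathrm{median}(s)\gg 1$: the contraction factor $\kappa\in(0,1)$ then strictly shrinks a large ratio, so outlier dominance at the sentence level is provably reduced rather than merely bounded. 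The shift-invariance property (P2) plays a supporting role here, letting me normalize $s$ by subtracting its median (or any constant) without altering $\alpha$, which makes the bookkeeping around the ``$+1$'' in $\sigma_1$ cleaner.

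The main obstacle is not the theorem's one-line deduction but the \emph{justification} that the explicit $\Softmax_1$ from \eqref{eq:softmax1} actually satisfies (P3) with some $\kappa<1$. Verifying this requires a direct calculation on the closed form $\sigma_1(x)_i=\exp(x_i)/(\sum_j\exp(x_j)+1)$: comparing the ratio $\sigma_1(x)_i/\sigma_1(x)_j=\exp(x_i-x_j)$ (which coincides with standard softmax) against the \emph{absolute} mass $\sigma_1(x)_i=\exp(x_i)/(Z(x)+1)$, where the extra unit in the denominator acts as a soft sink that bleeds mass away from the top entry more aggressively than from the median when $Z(x)$ is dominated by large scores. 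Quantifying this gap — e.g., by showing $\|\sigma_1(x)\|_\infty\le (Z(x)/(Z(x)+1))\|\mathrm{softmax}(x)\|_\infty$ while $\mathrm{median}(\sigma_1(x))$ contracts by a strictly smaller factor under heavy-tailed $x$ — is where I expect the real work, since $\kappa$ must be exhibited uniformly over the regime of interest rather than pointwise.
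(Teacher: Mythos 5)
Your proof matches the paper's almost line for line: both invoke Lemma~\ref{lem:pool-order} to justify that the pooled sentence scores $s$ are a legitimate input, apply Assumption~\ref{ass:softmax1}(P3) with $x=s$ to obtain \eqref{eq:tail-contract} directly, and cite (P2) for the shift-invariance remark. Your closing observation is also well taken, though it reaches beyond the proof itself: the paper never verifies that the explicit $\Softmax_1$ of \eqref{eq:softmax1} actually satisfies (P3) with a uniform $\kappa<1$, so the theorem is proved only conditionally on that postulate --- precisely the gap you flag as ``the real work.''
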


\begin{proof}
See \cref{proof:sentence-outlier} for a detailed proof.
\end{proof}

\begin{theorem}[Deployment-time suppression of low-attention sentences]
\label{thm:deploy}
Let the output logits be $\ell=W_o y$ with $\|W_o\|_{\mathrm{op}}=:B_o$ and $\|v_i\|\le B_v$.
For a sentence $i$ with $\alpha_i\le\varepsilon$, its one-layer contribution to logits is bounded by
\begin{equation}
\label{eq:logit-one-layer}
\|\Delta \ell_i\|
\ =\
\|W_o(\alpha_i v_i)\|
\ \le\
B_o\,\varepsilon\,\|v_i\|
\ \le\
B_o\,B_v\,\varepsilon.
\end{equation}
For $L$ stacked layers with Jacobians $\{J_\ell\}_{\ell=1}^L$ and $\|J_\ell\|_{\mathrm{op}}\le B_\ell$,
\begin{equation}
\label{eq:logit-L-layers}
\|\Delta \ell_i^{(L)}\|
\ \le\
\varepsilon\,
\Bigl(\prod_{\ell=1}^L B_\ell\Bigr)\,
B_v\,B_o.
\end{equation}
Since the $\Softmax_1$ map $\mathrm{sm}:\mathbb{R}^V\to\Delta^{V-1}$ is $1$-Lipschitz in the $\ell_\infty\!\to\!\ell_1$ norm,
\begin{equation}
\label{eq:prob-shift}
\bigl\|\mathrm{sm}(\ell+\Delta \ell_i^{(L)})-\mathrm{sm}(\ell)\bigr\|_1
\ \le\
\|\Delta \ell_i^{(L)}\|
\ \le\
B_o\,B_v
\Bigl(\prod_{\ell=1}^L B_\ell\Bigr)\,
\varepsilon.
\end{equation}
Let $B \coloneqq \max_{\ell \in [L]} B_{\ell}$ be the largest operator norm over $L$ layers.
Then
\begin{equation}
\Bigl\|\mathrm{sm}(\ell+\Delta \ell_i^{(L)})-\mathrm{sm}(\ell)\Bigr\|_1
\ \le\
B_o B_v B^L \varepsilon
\ =\
O\!\left(B_o B_v B^L \varepsilon\right).
\end{equation}
In practice $B_o, B_v, B$ are approximately constant, so the bound reduces to $O(\varepsilon)$.
Therefore, low-attention sentences are effectively skipped at inference.
\end{theorem}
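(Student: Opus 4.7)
The plan is to decompose the theorem into four sequential bounds, each arising from a linear or Lipschitz step, and then combine them via a product-of-norms argument. The backbone is that an attention weight $\alpha_i \le \varepsilon$ enters the layer output additively through $\alpha_i v_i$, so its influence on anything downstream can be tracked by standard operator-norm submultiplicativity.

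First, I would handle the one-layer contribution. Writing $y = \sum_{j=1}^m \alpha_j v_j$, the isolated contribution of sentence $i$ is the additive term $\alpha_i v_i$, so its effect on the logits is $\Delta \ell_i = W_o(\alpha_i v_i)$. Applying the operator-norm inequality $\|W_o u\| \le B_o \|u\|$ with $u = \alpha_i v_i$, and then inserting $\alpha_i \le \varepsilon$ and $\|v_i\| \le B_v$, immediately yields equation~(\ref{eq:logit-one-layer}). This is essentially definitional and should occupy only a line or two.

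Second, I would propagate this perturbation through the $L$ stacked layers. Treating each layer as a smooth map with Jacobian $J_\ell$ satisfying $\|J_\ell\|_{\mathrm{op}} \le B_\ell$, a first-order (or, by the mean-value inequality, exact Lipschitz) bound gives $\|\Delta x^{(\ell)}\| \le B_\ell \|\Delta x^{(\ell-1)}\|$. Telescoping from $\ell = 1$ to $L$ and composing with the readout $W_o$ produces equation~(\ref{eq:logit-L-layers}). Third, I would convert the logit perturbation into a probability-space perturbation using the standard fact that $\Softmax$ (and, by essentially the same Jacobian computation, $\Softmax_1$) is $1$-Lipschitz from $(\mathbb{R}^V, \|\cdot\|_\infty)$ to $(\Delta^{V-1}, \|\cdot\|_1)$; applied to $\Delta \ell_i^{(L)}$ this yields equation~(\ref{eq:prob-shift}). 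Finally, replacing each $B_\ell$ by the uniform bound $B := \max_\ell B_\ell$ upper-bounds the product by $B^L$, giving the displayed $O(B_o B_v B^L \varepsilon)$ estimate, and the constancy assumption on $B_o, B_v, B$ collapses this to $O(\varepsilon)$.

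The main obstacle I expect is the rigorous justification of the per-layer operator-norm contraction in step two: residual connections, LayerNorm, and nonlinearities are not globally Lipschitz in a uniform operator-norm sense, so either the statement must be read as a first-order claim (valid for infinitesimal $\varepsilon$), or $B_\ell$ must be interpreted as $\sup_x \|J_\ell(x)\|_{\mathrm{op}}$ along the straight-line segment between the clean and perturbed activations, invoking the mean-value inequality to upgrade the Jacobian bound to a Lipschitz bound. A secondary subtlety is verifying that the $+1$ in the $\Softmax_1$ denominator does not degrade the $\ell_\infty \to \ell_1$ Lipschitz constant; a direct Jacobian calculation shows the rows form a substochastic matrix whose entries sum to at most one, so the contraction only tightens and the $1$-Lipschitz property carries over. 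Once these two points are settled, the remaining algebra is routine.
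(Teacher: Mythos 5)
Your proposal tracks the paper's own proof step for step: operator-norm submultiplicativity for the one-layer bound, Jacobian operator-norm propagation across the $L$ layers, the $1$-Lipschitz ($\ell_\infty\!\to\!\ell_1$) property of $\Softmax_1$ to pass from logits to probabilities, and finally the replacement of $\prod_\ell B_\ell$ by $B^L$. The two caveats you raise---that the per-layer contraction should be read either as a first-order estimate or via the mean-value inequality with $B_\ell = \sup_x\|J_\ell(x)\|_{\mathrm{op}}$ along the relevant segment, and that the $+1$ in the $\Softmax_1$ denominator makes its Jacobian rows substochastic so the $1$-Lipschitz bound only tightens---are legitimate and go beyond what the paper's proof spells out, but they do not change the argument's structure.
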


\begin{proof}
See \cref{proof:deploy} for a detailed proof.
\end{proof}

\label{sec:theorical}

\section{Experimental Studies}
\label{sec:exp}We conduct a series of experiments to evaluate \sys in providing efficient reasoning, benchmarking its performance on GPT-oss~\cite{agarwal2025gpt}, \revise{Magistral-Small-1.1 \cite{rastogi2025magistral}} and Phi-4-Reasoning~\cite{abdin2025phi}. Each evaluation is conducted three times with different random seeds, and we report the average and standard deviation for each metric.

\begin{table}[htp]
    \centering
     \caption{\textbf{Comparison of FROST with Efficient Reasoning Methods.}  
We evaluate reasoning path efficiency by comparing FROST against four baselines across four mathematical datasets (GSM8K, MATH500, AIME24, and Minerva). 
Pass@1 and token usage (\#Tk) are reported as evaluation metrics, with variance omitted since it is consistently $\leq$ 2\%. 
Best results are shown in \textbf{bold}, and second-best are \underline{underlined}.  
In most settings, \sys achieves the best performance among all methods. Specifically, it improves accuracy by \revise{26.70\% while reducing token usage by 69.68\%} compared to the base model.
}
     \resizebox{\textwidth}{!}{
    \begin{tabular}{cccccccccccc}
    \toprule
       \multirow{2}{*}{\textbf{Type}} & \multirow{2}{*}{\textbf{Method}}  & \multicolumn{2}{c}{\textbf{GSM8K}} & \multicolumn{2}{c}{\textbf{MATH500}} & \multicolumn{2}{c}{\textbf{AIME24}} & \multicolumn{2}{c}{\textbf{Minerva}} & \multirow{2}{*}{$\Delta_{\text{Pass@1}}$} & \multirow{2}{*}{$\Delta_{\#\text{Tk}}$} \\
        \cmidrule(lr){3-4}  \cmidrule(lr){5-6}  \cmidrule(lr){7-8}  \cmidrule(lr){9-10}
       & & Pass@1 & \#Tk & Pass@1 & \#Tk & Pass@1 & \#Tk & Pass@1 & \#Tk \\
       \midrule
 \multirow{6}{*}{\rot{\makecell{Phi-4\\-Reasoning}}} & Base & 0.9242 & 1017.70 & 0.5480 & 1721.95 & 0.0667 & \underline{1017.70} & 0.2500 & 1898.86 & 0.000 & 0.00 \\

 & TALE & \textbf{0.9500} & 1716.60 & 0.5800 & 1874.43 & \underline{0.2900} & 2069.97 & 0.2627 & 2093.17 & \textbf{+0.074} & +524.49 \\

  & DRP & 0.8340 & \underline{721.00} & \textbf{0.6200} & 2122.00 & \textbf{0.3333} & 6135.00 & \underline{0.2701} & \underline{1289.50} & +0.067 & +1152.69\\
   & SelfBudgeter & 0.9189 & 1507.14 & 0.5347 & 1195.18 & 0.1342 & 1372.83 & 0.2357 & 2618.23 & +0.009 & +259.30\\
  & ThinkLess & 0.9279 & 1421.90 & 0.5414 & \underline{1101.21} & 0.1608 & 1405.40 & 0.2575 & 1708.70 & +0.025 & 
\underline{-4.75} \\
  & Ours  & \underline{0.9311} & \textbf{154.33} &  \underline{0.5980} & \textbf{344.37} & 0.2667 & \textbf{899.80} & \textbf{0.2716} & \textbf{401.19} & \underline{+0.070} & 
\textbf{-964.13} \\
  \midrule
 \multirow{6}{*}{\rot{\makecell{GPT-O\\SS-20B}}} & Base & 0.8704 & 1275.23 &  0.5400 & \underline{1575.36} & 0.1333 & \textbf{1003.57} & 0.2574 & 1586.95 & 0.000 & \underline{0.00}\\
 & TALE & 0.8283 & 2664.41 & 0.5454 & 3878.87 & \underline{0.2000} & 1354.67 &  0.2700 & 3262.47 & +0.011 & +1430.33\\
  & DRP & 0.7880 & \underline{902.50} & \textbf{0.6146} & 4137.00 & \textbf{0.2245} & 4983.00 & \underline{0.2715} & 1885.15 &\underline{+0.024} & +1616.64\\
   & SelfBudgeter & 0.8610 & 1850.00 & 0.5340 & 2285.00 & 0.1320 & 1256.00 & 0.2550 & 1298.00 & -0.005 & +312.47 \\
  & ThinkLess  & \underline{0.8740} & 1785.00 & 0.5410 & 2206.00 & 0.1600 & 1205.00 & 0.2580 & \underline{1220.00} & +0.008 & +244.22 \\
   & Ours & \textbf{0.8764} & \textbf{377.17} & \underline{0.5800} & \textbf{680.89} & 0.1667 & \underline{1009.60} & \textbf{0.2794} & \textbf{691.71} & \textbf{+0.025} & 
\textbf{-669.94} \\
\midrule
\multirow{6}{*}{\rot{\makecell{Magistral\\-Small-1.1}}}
& Base & 0.6075 & 2664.41 & 0.1480 & 1389.89 & 0.0000 & \underline{537.13} & 0.0699 & 1288.04 & 0.000 & 0.00 \\
& TALE & 0.7146 & 1516.86 & \textbf{0.3040} & \underline{723.91} & 0.0333 & 967.43 & \underline{0.1544} & \underline{748.18} & \textbf{+0.095} & \underline{-480.77} \\
& DRP & 0.6500 & \underline{902.50} & 0.2100 & 1680.33 & 0.0450 & 1350.77 & 0.1120 & 1604.22 & +0.048 & -85.41 \\
& SelfBudgeter & 0.6900 & 1850.00 & 0.2300 & 1520.00 & 0.0520 & 1256.00 & 0.1300 & 1298.00 & +0.069 & +11.13 \\
& ThinkLess & \underline{0.7200} & 1785.00 & \underline{0.2500} & 1405.00 & \underline{0.0600} & 1205.00 & 0.1450 & 1220.00 & +0.087 & -66.12 \\
& Ours & \textbf{0.7551} & \textbf{137.55} & \textbf{0.3040} & \textbf{98.20} & \textbf{0.0974} & \textbf{149.93} & \textbf{0.1551} & \textbf{109.23} & \textbf{+0.122} & \textbf{-1346.14} \\

  \bottomrule
    \end{tabular}
    }
    \label{tab:main}
\end{table}

\paragraph{Models.}
In our experiments, we use Phi-4-Reasoning~\cite{abdin2025phi},\revise{Magistral-Small-1.1 \cite{rastogi2025magistral}} and GPT-oss~\cite{agarwal2025gpt} as backbone models for efficient reasoning. Specifically, we adopt the Phi-4-Reasoning\footnote{https://huggingface.co/microsoft/Phi-4-reasoning}, \revise{Magistral-Small-1.1\footnote{https://huggingface.co/mistralai/Magistral-Small-2507}}  and GPT-oss-20B-finetune\footnote{https://huggingface.co/openai/gpt-oss-20b} checkpoints, both finetuned on mathematical datasets with detailed reasoning steps and answers using SFT under the \sys method.

\paragraph{Datasets.}
Following the setup in \cite{zhao2025let}, we use OpenR1~\cite{openr1} as the training corpus. To evaluate reasoning efficiency and generalization on complex mathematical problems, we adopt four out-of-domain benchmarks: GSM8K~\cite{cobbe2021training}, MATH500~\cite{lightman2023lets}, AIME24~\cite{aime_2024}, and Minerva~\cite{dyer2022minerva}. All datasets are designed for mathematical question answering.

\paragraph{Metrics.}
To evaluate the effectiveness of our efficient reasoning strategy, we report pass@1 as the accuracy metric and use the number of tokens in the reasoning response to measure token efficiency.

\paragraph{Baselines.}
We select five representative methods covering key paradigms of efficient reasoning:
\textbf{(1) TALE \cite{han2024token}:} a prompt-based approach that uses a soft token budget to generate concise reasoning responses.
\textbf{(2) DRP \cite{jiang2025drp}:} an SFT-based method that distills reasoning paths from a teacher model and applies step-level pruning to produce concise, skill-aware reasoning traces. 
\textbf{(3) SelfBudgeter \cite{li2025selfbudgeter}:} a reinforcement learning-based method that iteratively shortens the reasoning path by optimizing a token budget under budget and format reward signals.
\textbf{(4) ThinkLess \cite{fang2025thinkless}:} a reinforcement learning-based method that optimizes reasoning by detecting critical thinking points and skipping low-value steps. It introduces a reward function that balances accuracy with token usage, enabling models to “think less” while maintaining performance.  
We use the same hyperparameters as specified in their respective studies to ensure standardized evaluation conditions, enabling precise comparisons of each efficient reasoning method. 

\paragraph{Results.}
As shown in \cref{tab:main}, \sys achieves the best overall performance across state-of-the-art efficient reasoning methods, delivering slight accuracy improvements while substantially reducing token usage in response generation. Specifically, \sys improves accuracy by an average of \revise{26.70\% and reduces token usage by 69.68\%} on the \revise{three} base models, GPT-OSS-20B, \revise{Magistral-Small-1.1} and Phi-4-reasoning. Although TALE achieves the highest accuracy on certain tasks, this comes at the cost of significantly longer responses. This observation aligns with our assumption that excessively long or overly short responses can degrade model performance. By reducing token usage and focusing on high-attention sentences—i.e., critical reasoning traces—\sys lowers the probability of hallucination or misleading content and grounds responses in essential reasoning. However, \sys may still occasionally prune low-attention but important reasoning steps, which explains why its accuracy is not always the best across all baselines. 

\subsection{Supplementary Experiments}
In this section, we conduct additional experiments to examine the influence of our method’s performance at different training stages and under different attention functions.

\begin{table}[htp]
    \centering
     \caption{\textbf{Performance of Different Activation Functions.}  
We evaluate the impact of activation functions on method performance under the same training setup in \sys, using Phi-4-Reasoning across four mathematical datasets (GSM8K, MATH500, AIME24, and Minerva).
Pass@1 and token usage (\#Tk) are reported as evaluation metrics, with variance omitted since it is consistently $\leq$ 2\%.  
Best results are shown in \textbf{bold}, and second-best are \underline{underlined}. 
In most settings, \sys achieves the best performance, with $\Entmax$ consistently ranking second.}

     \resizebox{\textwidth}{!}{
    \begin{tabular}{ccccccccccc}
    \toprule
         \multirow{2}{*}{\textbf{Method}}  & \multicolumn{2}{c}{\textbf{GSM8K}} & \multicolumn{2}{c}{\textbf{MATH500}} & \multicolumn{2}{c}{\textbf{AIME24}} & \multicolumn{2}{c}{\textbf{Minerva}} & \multirow{2}{*}{$\overline{\text{Pass@1}}$} & \multirow{2}{*}{$\overline{\#\text{Tk}}$} \\
        \cmidrule(lr){2-3}  \cmidrule(lr){4-5}  \cmidrule(lr){6-7}  \cmidrule(lr){8-9}
        & Pass@1 & \#Tk & Pass@1 & \#Tk & Pass@1 & \#Tk & Pass@1 & \#Tk \\
       \midrule
   Base & \underline{0.9242} & 1017.70 & 0.5480 & 1721.95 & 0.0667 & 1017.70 & 0.2500 & 1898.86 & 0.4472 & 1414.05 \\
   $\Softmax$  & 0.8317 & 1160.63 & 0.4880 & 1379.52 & 0.1333 & 1909.07 & 0.2390 & 1934.72 & 0.4230 & 1595.99\\
    $\Sparsemax$ & 0.8188 & \underline{160.99} & 0.5120 & 451.59 & \underline{0.1667} & 948.60 & 0.2647 & 580.84 & 0.4406 & 535.26\\
    $\Entmax$ & 0.8984 & 163.75 & \underline{0.5520} & \underline{406.97} & \underline{0.1667} & \textbf{876.63} &  \textbf{0.2831} & \underline{439.48} & \underline{0.4751} & \underline{471.71}\\
   $\Softmax_1$ (\sys) & \textbf{0.9311} & \textbf{154.33} &  \textbf{0.5980} & \textbf{344.37} & \textbf{0.2667} & \underline{899.80} & \underline{0.2716} & \textbf{401.19} &\textbf{0.5169} & \textbf{449.92}\\
  \bottomrule
    \end{tabular}
    }
    \label{tab:softmax}
\end{table}

\paragraph{Efficiency of Different Activation Functions.}  
To evaluate the contribution of $\Softmax_1$ in \sys, we conduct experiments comparing \sys with different activation functions: vanilla $\Softmax$, $\Sparsemax$~\cite{hu2024sparse,martins2016softmax}, and $\Entmax$~\cite{wu2023stanhop,correia2019adaptively}.  
Here, $\Entmax$ is a special case of Tsallis $\alpha$-entmax transformations, which interpolate between softmax and sparsemax.
We evaluate these strategies on four datasets—GSM8K, MATH500, AIME24, and Minerva—using Phi-4-Reasoning. 
As shown in \cref{tab:softmax}, the results demonstrate that \sys achieves the best overall performance in both Pass@1 accuracy and token usage. Specifically, the average accuracy increases by \textbf{15.65\%}, while the number of tokens decreases by \textbf{68.18\%} compared to the base model. 
\sys also surpasses the overall performance of $\Sparsemax$ and $\Entmax$, which tend to sharpen both low- and high-attention sentences, potentially cutting off critical reasoning traces. In contrast, \sys is less prone to this issue.
The only exception is the Minerva dataset, where $\Entmax$ attains higher accuracy than \sys while maintaining a similar number of tokens. The underlying reason is difficult to explain at this stage, but it is a pleasant surprise that, except for GSM8K, the overall performance of $\Sparsemax$ and $\Entmax$ does not decline significantly and in some cases even surpasses the base model. This offers a perspective contrary to that of \citet{yang2025less,wang2024reducing}.

\begin{table}[htp]
    \centering
    \vspace{-0.5em}
     \caption{\textbf{Outlier Removal Performance in FROST.} 
We evaluate outlier removal performance on the AIME2024 dataset using the Phi-4-Reasoning model. 
As outlier metrics, we report the maximum infinity norm $\norm{\mathbf{x}}_{\infty}$  and average kurtosis of the activation tensors. 
To assess the proportion of critical traces, we also report the average sentence entropy before and after applying \sys. 
All results are reported with variance omitted, as it is consistently $\leq$ 2\%. 
Best results are shown in \textbf{bold}, and second-best results are \underline{underlined}. In most settings, \sys achieves the best performance in outlier removal and yields higher average sentence entropy. These metrics demonstrate that our method effectively removes reasoning outliers, thereby improving both reasoning performance and efficiency.
}
 
    \resizebox{\textwidth}{!}{
   \begin{tabular}{lccccc}
    \toprule
         \textbf{Method} & Maximum Infinity Norm $\norm{\mathbf{x}}_{\infty} \downarrow$ & Average Kurtosis $\downarrow$ & Average Sentence Entropy $\uparrow$ & Pass@1 $\uparrow$ & \#Tk $\downarrow$ \\
       \midrule
   Base  & 35.31 & 241.72 & 2.71 & 0.0667 & 1017.70 \\
   $\Softmax$ & 34.53 & 189.36 & 2.79 &  0.1333 & 1909.07\\
   $\Sparsemax$ & 34.06 &  152.18 & \underline{2.93} & \underline{0.1667} & 948.60\\
   $\Entmax$ & \underline{30.39} & \underline{43.72} & 2.92 & \underline{0.1667} & \textbf{876.63}\\
   \sys &  \textbf{29.67} & \textbf{21.54} & \textbf{3.07} & \textbf{0.2667} & \underline{899.80} \\  \bottomrule
\end{tabular}
    }
    \label{tab:outlier}
\end{table}

\paragraph{Outlier Removal Performance in FROST.}  
To evaluate the performance of \sys in removing attention outliers, we employ two outlier-specific metrics: the \textit{maximum infinity norm} $\norm{\mathbf{x}}_{\infty}$ of the activation tensors $\mathbf{x}$ across all Transformer layers, and the \textit{average kurtosis} of $\mathbf{x}$, which together quantify the presence of outliers.
In addition, to demonstrate that removing attention outliers increases the probability assigned to critical sentences, we introduce an entropy-based evaluation metric. 
Following \citet{wang2025beyond}, token entropy serves as an indicator of criticality: critical tokens tend to exhibit higher entropy than non-critical ones.  
When a sentence contains more critical tokens, it is expected to exert a stronger influence on final answer generation.  
Accordingly, we use average sentence entropy to assess whether the reasoning traces in \sys become more critical after training.  
In our experiments, we analyze these metrics on the AIME2024 dataset using the Phi-4-Reasoning model and compare them with the base model. As shown in \cref{tab:outlier}, \sys effectively reduces outliers, evidenced by lower maximum infinity norm  $\norm{\mathbf{x}}_{\infty}$  and average kurtosis values. Furthermore, the increase in average sentence entropy indicates that \sys strengthens the model’s focus on critical reasoning traces, thereby improving reasoning efficiency. 
Specifically, we reduce the maximum infinity norm $\norm{\mathbf{x}}_{\infty}$ by \textbf{15.97\%} and the average kurtosis by \textbf{91.09\%.} In addition, the average sentence entropy increases by \textbf{13.28\% }compared to the base model.
Additionally, the results show that reasoning outlier metrics—maximum infinity norm $\norm{\mathbf{x}}_{\infty}$ and average kurtosis—are closely related to model performance and average sentence entropy. Higher outlier values correspond to lower sentence entropy and less efficient reasoning traces. This further supports that the reasoning-outlier removal contributes to more efficient reasoning. The only exception is that the average sentence entropy of $\Sparsemax$ is similar to $\Entmax$, while the reasoning outlier values of $\Entmax$ are much smaller than those of $\Sparsemax$. 
\revise{A plausible explanation is that both $\Entmax$ and $\Sparsemax$ act as sharpening activations that jointly suppress low- and high-valued attention scores. This bidirectional truncation can inadvertently remove parts of crucial reasoning traces, lowering average sentence entropy and reducing Pass@1 performance. Meanwhile, attention outlier metrics such as the maximum infinity norm and kurtosis primarily reflect internal activation dynamics rather than output quality, explaining their relative stability despite external performance declines. Since both activations reshape attention distributions similarly, their outputs also appear alike—with comparable Pass@1 and entropy values—though $\Entmax$’s smoother contraction yields slightly less degradation in outlier metrics. Overall, this indicates that excessive sharpening can eliminate valuable reasoning signals even while suppressing attention outliers, highlighting $\Softmax_1$’s advantage through selective tail contraction.
}

\subsection{Generalizability of Model}
In this section, we evaluate the generalization ability of \sys on out-of-domain reasoning tasks to verify that its improvements do not harm, but rather preserve or enhance, the model’s generation quality beyond the training domain. Using Phi-4-Reasoning as the base model, we test on three additional reasoning benchmarks—LeetCode \cite{xia2025leetcodedataset}, LiveCodeBench \cite{jain2024livecodebench}, and UGPhysical \cite{xu2025ugphysics}—covering both coding and physical reasoning tasks. The results in \cref{tab:genal} show that \sys preserves—and even improves—generalization to unseen reasoning tasks. This is expected because \sys filters out uncritical reasoning traces in a manner that generalizes beyond the specific tasks used during fine-tuning. Since \sys only replaces the attention activation with $\Softmax_1$ and uses lightweight LoRA updates, the parameter shift is minimal, ensuring that the model’s broader reasoning ability remains intact.

\begin{table}[ht]
    \centering
    \caption{\revise{\textbf{FROST Generalization on Other Reasoning Tasks.} We evaluate the generalization of FROST using Phi-4-Reasoning across three out-of-domain reasoning tasks (Code and Physics). Pass@1 accuracy and token usage (\#Tk) are reported, with variance consistently $\leq$ 2\%. Best results are in bold, and second-best are underlined. FROST consistently achieves top performance, demonstrating strong generalization across reasoning domains.}}
    \label{tab:genal}
    
    \resizebox{\textwidth}{!}{
    \begin{tabular}{ccccccccc}
    \toprule
         \multirow{2}{*}{\textbf{Method}}  & \multicolumn{2}{c}{\textbf{Leetcode}} & \multicolumn{2}{c}{\textbf{LiveCodeBench}} & \multicolumn{2}{c}{\textbf{UGPhysics}} & \multirow{2}{*}{$\overline{\text{Pass@1}}$} & \multirow{2}{*}{$\overline{\#\text{Tk}}$} \\
        \cmidrule(lr){2-3}  \cmidrule(lr){4-5}    \cmidrule(lr){6-7}
        & Pass@1 & \#Tk & Pass@1 & \#Tk & Pass@1 & \#Tk  \\
       \midrule
   Base & 0.3222 & 2755.13 & 0.3248 & 3154.80 &  \underline{0.3172} & \underline{2603.00} & 0.3214 & 2837.64 \\
   $\Softmax$  & \underline{0.3778} & \underline{2106.85}  & \underline{0.3538} & \underline{2909.07} & 0.3011 & 2622.52 & \underline{0.3442} & \underline{2546.15}\\
   \sys & \textbf{0.3889} & \textbf{1163.06} &   \textbf{0.3777} & \textbf{1967.56} & \textbf{0.3473} & \textbf{805.77} & \textbf{0.3713} & \textbf{1312.13}\\
  \bottomrule
    \end{tabular}
    }
    \label{tab:genal}
\end{table}

\section{Discussion and Conclusion}
\label{sec:conclusion}We propose an attention-aware efficient reasoning method, \textbf{FROST}. Our approach introduces the concept of \textit{reasoning outliers}, which contribute to high latency in reasoning performance, and provides an outlier removal mechanism that enables LRMs to leverage lightweight supervised fine-tuning (SFT) for generating efficient and accurate reasoning traces in mathematical problem solving. Theoretically, we show that the existing outlier removal technique $\Softmax_1$ is effective for reasoning outliers and can operate at the sentence level. Empirically, FROST improves response accuracy by \revise{\textbf{26.70\%} and reduces token usage by \textbf{69.68\%}}. In addition, it decreases the maximum infinity norm $\norm{\mathbf{x}}_{\infty}$  by \textbf{15.97\%} and average kurtosis by \textbf{91.09\%}, confirming its effectiveness in mitigating reasoning outliers.

Although \sys achieves strong performance in efficient reasoning, several limitations remain. First, our method is currently restricted to mathematical reasoning tasks, while many reasoning models also target domains such as coding. Second, \sys relies solely on supervised fine-tuning and does not incorporate GRPO, which could further enhance efficiency. In future work, we plan to extend \sys to additional reasoning tasks, including coding, and to develop a GRPO-based approach that builds on our current findings to further improve efficient reasoning performance.

\clearpage
\section*{Acknowledgments}

HL would like to thank Dennis Wu for insightful discussions on theoretical analysis.
This research was supported in part by the computational resources and staff contributions of the RTRC High Performance Computing Cluster. 
Additional support was provided by the Quest High Performance Computing Facility at Northwestern University, which is jointly supported by the Office of the Provost, the Office for Research, and Northwestern University Information Technology.
The content is solely the responsibility of the authors and does not necessarily represent the official
views of the funding agencies.

\section*{Ethical Statement}
This work investigates reasoning outliers in large reasoning models (LRMs) and proposes an outlier-removal technique to mitigate attention outliers. In line with the ICLR Code of Ethics\footnote{\url{https://iclr.cc/public/CodeOfEthics}}
, we acknowledge that our method may inadvertently amplify biases present in training data, potentially leading to unfair outcomes for underrepresented groups. Prior studies have also noted that supervised fine-tuning (SFT) can induce shallow alignment and affect red-teaming protection, but this issue is outside the scope of our work. Our focus is on improving reasoning efficiency, and we believe this research does not raise serious ethical concerns.

\section*{Reproducibility}
\label{sec:reproduce}
To ensure reproducibility, we release an anonymous open-source repository (\href{https://anonymous.4open.science/r/GAPO-DADB}{link}
) containing the full implementation of \sys and selected baselines, with plans for full open-sourcing upon acceptance. All experiments are conducted with three random seeds, yielding stable results with standard deviations below 2\%. We adopt a unified training setup using the AdamW optimizer with learning rate $1\times10^{-5}$, batch size 8, and batch size 256 for deployment. For low-rank adaptation, we set the LoRA rank to 8 and LoRA $\alpha$ to 16. Detailed hyperparameters are provided in \cref{app:hyper}, and theoretical proofs are included in \cref{app:prove}.

\bibliographystyle{iclr2026_conference}
\bibliography{refs, github_ref}

\newpage  %

\titlespacing*{\section}{0pt}{*1}{*1}
\titlespacing*{\subsection}{0pt}{*1.25}{*1.25}
\titlespacing*{\subsubsection}{0pt}{*1.5}{*1.5}

\setlength{\abovedisplayskip}{10pt}
\setlength{\abovedisplayshortskip}{10pt}
\setlength{\belowdisplayskip}{10pt}
\setlength{\belowdisplayshortskip}{10pt}

\normalsize
\appendix
\label{sec:append}

\part*{Supplementary Material}

{
\setlength{\parskip}{-0em}
\startcontents[sections]
\printcontents[sections]{ }{1}{}
}

\section{Impact Statement}
\label{ap:impact}
We believe this methodology offers an opportunity to strengthen the core of large reasoning models by improving efficiency and enabling models to produce more critical reasoning traces. However, it may also amplify biases present in the training data, potentially leading to unfair or discriminatory outcomes for underrepresented groups.

\section{Additional Related Work}
\textbf{Efficient Alignment.}
In recent years, foundation models \cite{zhou2025genomeocean,he2025rhythm,he2024st,wang2024qwen2,touvron2023llama} have shown strong capabilities in solving multitask problems. To further improve their performance on specific tasks, alignment techniques are essential for refining model behavior. However, traditional approaches like RLHF \cite{ouyang2022training} and DPO \cite{rafailov2023direct} are computationally expensive. 
This highlights the urgent need for parameter-efficient fine-tuning methods that offer effective and economical alignment for foundation models. Several traditional methods demonstrate strong capabilities in aligning foundation models, including LoRA \cite{hu2021lora} and QLoRA \cite{dettmers2024qlora}. Building on this, \citet{luo2025fast} propose a LoRA variant that replaces the standard softmax layer with OutEffHop layers \cite{hu2024outlier} to improve the efficiency of low-rank adaptation. 
However, all of these methods are heavily based on LoRA, and when adaptation is required for modules outside the attention architecture, the computational cost increases significantly. 
\citet{zhao2025understanding, luo2024decoupled} propose novel alignment methods that focus on small subsets of neurons within foundation models. For example, \citet{zhao2025understanding} identify key neurons with high influence on LLMs' jailbreak defense using latent representations, and fine-tune only these neurons using red-teaming datasets. 
Our method builds on fast low-rank adaptation techniques \cite{luo2025fast}, further improving adaptation efficiency, and integrates them into SFT training to optimize reasoning paths and produce efficient reasoning models.

\section{Proofs of Main Text}
\label{app:prove}
\subsection{\cref{lem:pool-order}}
\label{proof:pool-order}
\textit{Proof of \cref{lem:pool-order}.}
Monotonicity means that if we increase any input coordinate to $\phi$, its output does not decrease.
Let $u=\{z_t\}_{t\in S_i}$ and $w=\{z_{t'}\}_{t'\in S_j}$.
If for each coordinate of $u$ there is a not-smaller coordinate in $w$ replaced by the smaller value, then by repeatedly applying coordinatewise monotonicity we obtain
$
s_i=\phi(u) \ \ge\ \phi(w)=s_j.
$
Order preservation (P1) then yields $\alpha_i\ge \alpha_j$.

\subsection{\cref{thm:sentence-outlier}}
\label{proof:sentence-outlier}
\textit{Proof of \cref{thm:sentence-outlier}.} By Lemma~\ref{lem:pool-order}, sentence scores $s$ reflect dominance induced by token compatibilities under $\phi$.
Applying Assumption~\ref{ass:softmax1}(P3) directly to $s$ yields \eqref{eq:tail-contract}.
Assumption~\ref{ass:softmax1}(P2) allows re-centering $s\leftarrow s-c\mathbf{1}$ without changing $\alpha$; thus \eqref{eq:tail-contract} is invariant to any global shift and depends only on relative separations.

\subsection{\cref{thm:deploy}}
\label{proof:deploy}
\textit{Proof of \cref{thm:deploy}.} For \eqref{eq:logit-one-layer}, apply operator-norm submultiplicativity:
$\|W_o(\alpha_i v_i)\|\le \|W_o\|_{\mathrm{op}}\cdot \alpha_i\|v_i\|\le B_o\,\varepsilon\,B_v$.
To obtain \eqref{eq:logit-L-layers}, propagate the perturbation through $L$ differentiable layers with Jacobians $J_\ell$:
\begin{equation*}
\|\Delta \ell_i^{(L)}\|
\ \le\
\Bigl(\prod_{\ell=1}^L \|J_\ell\|_{\mathrm{op}}\Bigr)\,
\|W_o\|_{\mathrm{op}}\,
\alpha_i \|v_i\|
\ \le\
\varepsilon
\Bigl(\prod_{\ell=1}^L B_\ell\Bigr)
B_o B_v.
\end{equation*}
Finally, since $\Softmax_1$ is $1$-Lipschitz in $\ell_\infty\!\to\!\ell_1$, the change in probabilities is bounded by the logit change, yielding \eqref{eq:prob-shift}.
Replacing $\prod_{\ell=1}^L B_\ell$ with $B^L$ (by definition of $B$) gives the stated $O(B_o B_v B^L \varepsilon)$ rate.
If $B_o,B_v,B$ are $O(1)$, the rate simplifies to $O(\varepsilon)$.

\revise{
\section{An Example of LRM Reasoning Traces}
In this section, we analyze the Phi-4-Reasoning response to the first question of AIME24, which is also illustrated in \cref{fig:theoritial}. As shown in the color box in \cref{fig:lrm_trace}, traces S1 and S2 are classified as uncritical. Although S2 includes partially critical content such as \texttt{"So the walking time (actual walking time) plus t minutes equals total time."}, its overall reasoning remains non-critical. Trace S3 represents a critical reasoning step, where the model identifies the two key equations in the problem. Subsequently, from S4 to S19, the model enters a self-verification phase, producing reasoning traces beginning with \texttt{wait} that reflect self-checking and correction. Starting from S20, the model resumes critical reasoning after the signal \texttt{"We’ll produce final answer in a box."}, and by S24, it generates the final answer, concluding its reasoning process.

\begin{tcolorbox}[colback=black!5!white,colframe=black,title=\textbf{An Example of LRM Reasoning Traces},floatplacement=t,float,label={fig:lrm_trace}]
\textcolor{darkgreen}{\textbf{[Question]}} Every morning Aya walks $9$ km and then spends $t$ minutes in a coffee shop. 
At speed $s$ km/h, the total time is $4$ hours (including the coffee stop). 
At speed $s+2$ km/h, the total time is $2$ hours $24$ minutes. 
If she walks at $s+\tfrac{1}{2}$ km/h, find the total time in minutes (including the coffee stop).

\textcolor{red}{\textbf{[Uncritical Trace S1]}} <think>We are given: "Every morning Aya goes for a 9-kilometer-long walk and stops at a coffee shop afterwards. When she walks at a constant speed of $s$ kilometers per hour, the walk takes her 4 hours, including $t$ minutes spent in the coffee shop. When she walks $s+2$ kilometers per hour, the walk takes her 2 hours and 24 minutes, including \$t\$ minutes spent in the coffee shop. Suppose Aya walks at $s+\frac{1}{2}$ kilometers per hour. Find the number of minutes the walk takes her, including the \$t\$ minutes spent in the coffee shop."

\textcolor{red}{\textbf{[Uncritical Trace S2]}} Let's denote: "Walking distance = 9 km. There's a coffee shop stop time $t$ minutes, but note that the given total times includes the coffee shop stop time. So the walking time (actual walking time) plus $t$ minutes equals total time.
Let $s$ be her walking speed (km/h) in first scenario. Then walking time (in hours) is $9/s$. And the total time is 4 hours."

\textcolor{blue}{\textbf{[Critical Trace S3]}}  But includes $t$ minutes stop, so actual walking time is 4 hours minus $t$ minutes (converted to hours, $t/60$ hours), so equation: $9/s = 4 - t/60$. (Equation 1) and $9/(s+2) = 2.4 - t/60$ (Equation 2)

\textcolor{red}{\textbf{[Uncritical Trace S4]}} But wait, let's check: speed = $s+2$, then walking time (in hours) is $9/(s+2)$. Total time is 2 hours 24 minutes, which is $2 + 24/60 = 2.4$ hours.

\textcolor{red}{\textbf{[Uncritical Trace S5]}} But wait, let's check: The problem said: "Suppose Aya walks at s+1/2 kilometers per hour. Find the number of minutes the walk takes her, including the t minutes spent in the coffee shop." 

\textcolor{red}{\textbf{[Uncritical Trace S6]}} Wait, 2 hours 24 minutes = 144 minutes. 

\textcolor{red}{\textbf{[Uncritical Trace S7]}} Let me recalc: 3 hours 24 minutes = 2*60 + 24 = 120 + 24 = 144 minutes.

\textcolor{red}{\textbf{[Uncritical Trace S8]}} Let us double-check the reasoning process.

\textcolor{red}{\textbf{[Uncritical Trace S9-S17]}} \red{.....}

\textcolor{red}{\textbf{[Uncritical Trace S18]}} I'll now produce step-by-step explanation. 

\textcolor{red}{\textbf{[Uncritical Trace S19]}} We'll produce final answer in a box.

\textcolor{blue}{\textbf{[Critical Trace S20]}}  The two equations are given as $9/s = 4 - t/60$ (Equation 1) and $9/(s+2) = 2.4 - t/60$ (Equation 2).

\textcolor{blue}{\textbf{[Critical Trace S21]}}  Alternatively, we can solve for $t$ in terms of $s$ from equation 1: $t = 60*(4 - 9/s) = 240 - 540/s$, and plug in equation 2: $9/(s+2) = 2.4 - (240 - 540/s)/60 = 2.4 - (4 - 9/s) = 2.4 - 4 + 9/s = (9/s) - 1.6$. 

\textcolor{blue}{\textbf{[Critical Trace S22]}} So equation becomes: $9/(s+2) = (9/s) - 1.6$. Multiply both sides by $s(s+2)$: $9s = (9(s+2)) - 1.6 s(s+2)$.

\textcolor{blue}{\textbf{[Critical Trace S23]}} Now, when Aya walks at speed $s+1/2 = 2.5 + 0.5 = 3$ km/h, then walking time (in hours) is $9/3 = 3$ hours. But total time includes coffee shop stop time $t$ minutes, which is 24 minutes = $24/60 = 0.4$ hours. So total time is $3 + 0.4 = 3.4$ hours, which is 3 hours 24 minutes.

\textcolor{blue}{\textbf{[Critical Trace S24]}} I'll produce final answer: \boxed{204}. </think>

\end{tcolorbox}
}
\section{
Experiment System and Implement Settings}

\subsection{Computational Resources}
\label{app:resource}
We perform all experiments using two NVIDIA H100 GPUs with 80GB of memory and a 12-core INTEL(R) XEON(R) PLATINUM 8592 CPU operating at 1.90GHz. Our code is developed in PyTorch and utilizes the Hugging Face Transformer Library for experimental execution. For running the LLMs, we use the default system prompt provided by the official source and set the temperature to 0.6 to balance consistency and performance.

\subsection{Hyperparameters}
\label{app:hyper}
We present the hyperparameters used in the fine-tuning stage for each model. 
We use \textbf{AdamW} \cite{loshchilov2017decoupled} as the optimizer. 
Most other hyperparameters are kept consistent across all models and datasets, including a batch size of 256 during deployment and 8 during training. In training, we also use gradient accumulation with 4 steps and set the weight decay to 0.01 for all training runs.
A learning rate of $1e^{-5}$ is used for all models during fine-tuning. 
For low-rank adaptation, we use a LoRA rank of 8 and LoRA alpha set to 16. In \sys, we set the maximum training steps to 5,000.
All supervised fine-tuning and GRPO training are conducted using mixed precision with bfloat16.
In deployment, we set the temperature to 0.6 for all models with top-$p$ sampling at 0.9. For evaluation, we use a maximum generation length of 4096 across all models, except TALE.

\section{Additional Experiments}
In this section, we present additional experiments demonstrating that \sys surpasses current state-of-the-art efficient reasoning methods.

\subsection{Training and Test Time Comparison}
We conduct experiments to measure the training and inference time of each baseline and compare their computational costs with \sys. 
For evaluation, test time is measured on the AIME tasks with the GPT-OSS-20B model, while training time is reported using the respective datasets specified in each baseline’s original paper. 
All experiments are conducted on the same computational resources, as described in \cref{app:resource}.  

\begin{table}[ht]
    \centering
    \caption{\textbf{Comparison of Training and Test Time Costs Across Methods.} We conduct experiments to measure the training and test time of each method. For test-time evaluation, we use the AIME dataset with the GPT-OSS-20B model. Best results are shown in \textbf{bold}, and second-best are \underline{underlined}.  }
    \vspace{-0.1in}
    \begin{tabular}{lcccc}
    \toprule
      Method   &  TALE & DRP & ThinkLess & \sys\\
      \midrule
     Training Time (m)   & -  & \underline{353} & 1186 &  \textbf{204} \\
     Test Time (m) &  56 & 18.5 & \underline{4.2} & \textbf{3}   \\
     \bottomrule
    \end{tabular}
    
    \label{tab:time}
\end{table}

As shown in \cref{tab:time}, \sys achieves the fastest training time among all methods, while also minimizing computation cost and inference time during deployment. This demonstrates that our approach not only accelerates training but also reduces deployment overhead.

\subsection{Attention Distributions of Activation Functions}
We conduct an additional experiment to analyze the attention distribution of GPT-OSS-20B on a sample from the GSM8K dataset. As shown in \cref{fig:distribution}, \sys effectively removes a large number of low-attention sentences while retaining significant ones. In contrast, the vanilla model produces many sentences with low attention weights, and $\Sparsemax$ and $\Entmax$ retain only one to two sentences, often aggressively discarding important reasoning traces. This visualization provides an explanation consistent with the performance results reported in \cref{tab:softmax}.

\begin{figure}[htp]
    \centering
    \includegraphics[width=\linewidth]{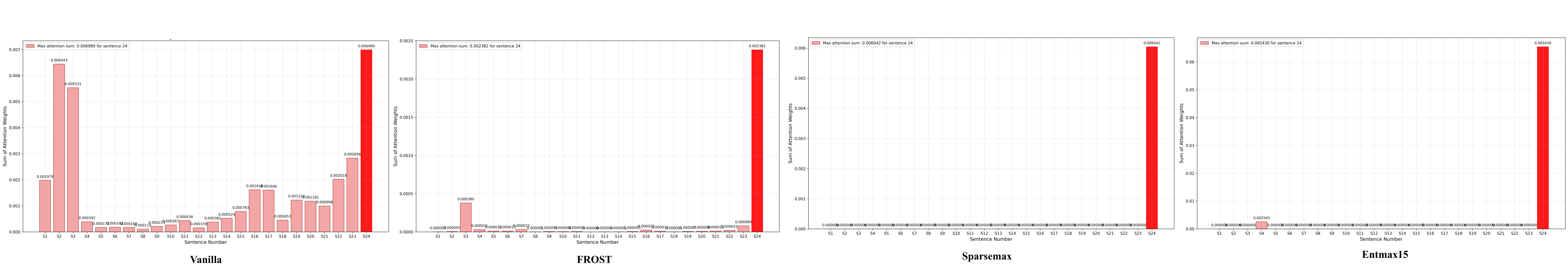}
    \caption{\textbf{Attention Distribution of Each Activation Function.}}
    \label{fig:distribution}
\end{figure}

\revise{
\section{Influence the attention dynamics of $\Softmax_1$ during training and inference }
We observe that incorporating $\Softmax_1$ significantly influences both training and inference attention dynamics across transformer layers. During supervised fine-tuning (SFT), $\Softmax_1$ enforces tail contraction by suppressing low-attention activations, which stabilizes gradients and reduces the variance of updates propagated through residual connections. This effect leads to faster convergence of LoRA adapters, as the low-rank parameter subspace more efficiently aligns with critical attention directions, improving overall adaptation coverage within fewer training steps. This observation is consistent with \citet{luo2025fast,hu2024outlier}. Across layers, $\Softmax_1$ reshapes the attention landscape—shallow layers become more selective in contextual grounding, while deeper layers exhibit higher entropy concentration around critical reasoning traces. During inference, this sharpening propagates forward, effectively filtering redundant reasoning sentences while maintaining coherence. Together, these behaviors demonstrate that $\Softmax_1$ not only enhances efficient reasoning but also accelerates LoRA-SFT optimization by improving the representational focus of each attention head.

\section{Influence of $\Softmax_1$ Across Layers}
We analyze the effect of $\Softmax_1$ across transformer layers by visualizing the attention distributions of head 15 for both vanilla $\Softmax$ and $\Softmax_1$. As shown in \cref{fig:all_layer,fig:all_layer_softmax1}, $\Softmax_1$ consistently suppresses attention outliers, leading to smoother and more stable activations across the network. In lower layers, $\Softmax_1$ contracts heavy tails and mitigates rare extreme peaks, enhancing local feature mixing with higher-entropy and reduced kurtosis distributions. In higher layers, it suppresses residual long-range spikes and sharpens focus on semantically relevant tokens, yielding sparser yet more stable attention and clearer causal information flow.

\begin{figure}[h]
    \centering
    \includegraphics[width=\linewidth]{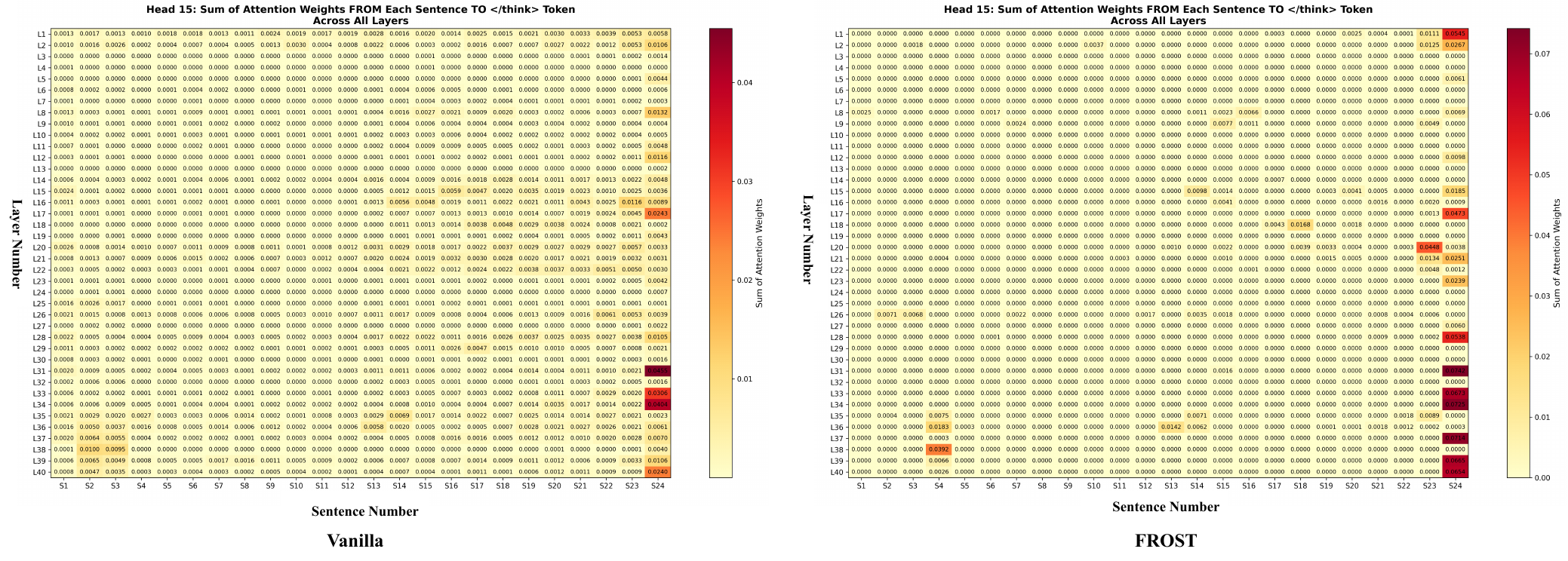}
    \caption{\textbf{Theoretical Analysis of Reasoning Outlier Removal in All Layers}}
    \label{fig:all_layer}
\end{figure}
\begin{figure}[ht]
    \centering
    \includegraphics[width=\linewidth]{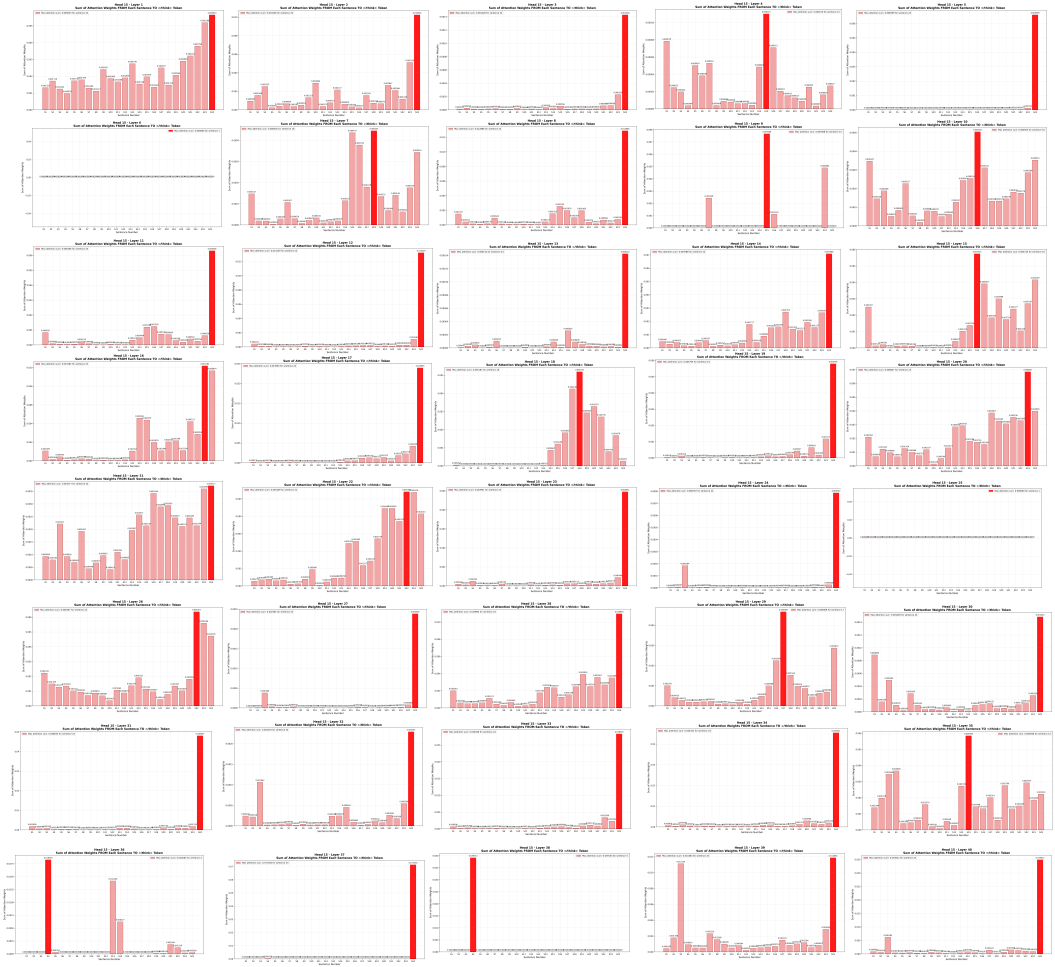}
    \caption{\textbf{Attention Distribution of $\Softmax_1$ Across All Layers}}
    \label{fig:all_layer_softmax1}
\end{figure}

\section{Extended Attention Heatmaps Across Additional Layers and Heads}
In this section, we present extended attention heatmaps covering additional layers and heads. Specifically, we analyze \textbf{Layers 0, 5, 15, 25, 30, 35, and 39} and \textbf{Heads 0, 5, 10, 15, 20, 25, 30, 35, and 39} to provide a more comprehensive view of attention evolution across the network. The corresponding observations are illustrated in \cref{fig:all_layer_heatmap}. 
\begin{figure}[!ht]
    \centering
    \includegraphics[width=0.45\linewidth]{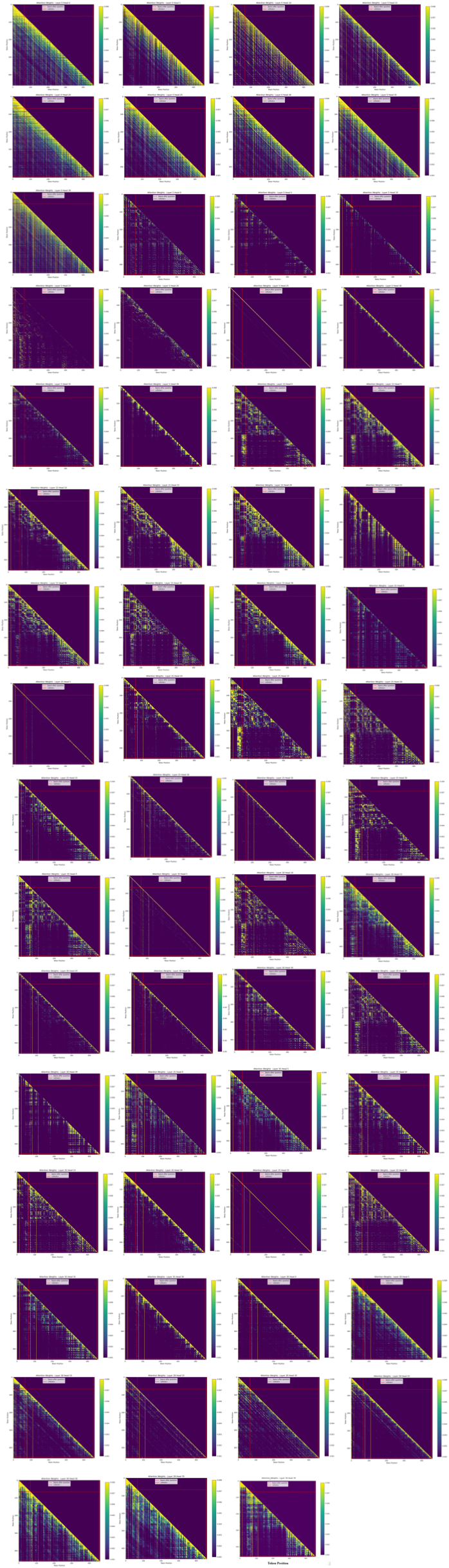}
    \caption{\textbf{Extended Attention Heatmaps Across Additional Layers and Heads}}
    \label{fig:all_layer_heatmap}
\end{figure}

\section{Human Expert Evaluation}
We invite three computer science students specializing in reasoning models to annotate reasoning traces generated by the original and \sys-trained models. We then compare the traces pruned by \sys and evaluate their criticality based on relevance and contribution to the final answer. Averaging across all evaluators, \sys achieves \textbf{92\%} accuracy in correctly removing non-critical reasoning traces.
Only \textbf{8\%} of reasoning traces are incorrectly removed, which significantly degrades final-answer accuracy. These mistakenly pruned traces are typically long and contain repeated information that supports self-verification and error correction. However, they also provide critical content—such as key equations—in the end of trace. This observation suggests a potential explanation for why \sys achieves the second-best Pass@1 score in the Phi-4-Reasoning experiment shown in \cref{tab:main}.
}

\section{Disclosure of LLM Usage}
\label{ap:llm}
In our paper and project, we use large language models (LLMs) to help revise the text for greater conciseness and precision.

\end{document}